\newcommand\contributionNote[1]{%
  \begingroup
  \renewcommand\thefootnote{}\footnote{\kern-5pt \textcolor{white}{\rule{5pt}{2ex}}#1}%
  \addtocounter{footnote}{-1}%
  \endgroup
}
\begin{document}
\begin{center}
    \vspace*{0.5cm}

 \LARGE{Linear Recursive Feature Machines  provably \\recover low-rank matrices}  
\vspace*{0.5cm}

\large{Adityanarayanan Radhakrishnan$^{1, 2}$ \\ Mikhail Belkin$^{3}$ \\ Dmitriy Drusvyatskiy$^{4}$}

\vspace*{0.5cm}

\normalsize{$^{1}$Harvard University.} \\
\normalsize{$^{2}$Broad Institute of MIT and Harvard.} \\
\normalsize{$^{3}$Hal\i c\i o\u glu Data Science Institute, UC San Diego.} \\
\normalsize{$^{4}$University of Washington.}\\
\vspace*{0.5cm}
\end{center}

\setcounter{footnote}{3}
\begin{abstract} 
A fundamental problem in machine learning is to understand how neural networks  make accurate predictions, while seemingly bypassing the curse of dimensionality.  A possible explanation is that  common training algorithms for neural networks {\em implicitly} perform dimensionality reduction---a process called feature learning.  Recent work~\cite{radhakrishnan2022feature} posited that the effects of feature learning can be elicited from a classical statistical estimator called the average gradient outer product (AGOP). The authors proposed Recursive Feature Machines (RFMs)  as an algorithm that {\em explicitly} performs feature learning by alternating between  $(1)$ reweighting the feature vectors by the AGOP and (2) learning the prediction function in the transformed space. In this work, we develop the first theoretical guarantees for how RFM performs dimensionality reduction by focusing on the  class of overparametrized problems arising in sparse linear regression and low-rank matrix recovery. Specifically, we show that RFM restricted to linear models (\textit{lin-RFM})  generalizes the well-studied Iteratively Reweighted Least Squares (IRLS) algorithm.  Our results shed light on the connection between feature learning in neural networks and classical sparse recovery algorithms. In addition, we provide an implementation of lin-RFM that scales to matrices with millions of missing entries.  Our implementation is faster than the standard IRLS algorithm as it is SVD-free.  It also outperforms deep linear networks for sparse linear regression and low-rank matrix completion.  
\end{abstract}

\section{Introduction}

Dramatic recent successes of deep neural networks appear to have overcome the curse of dimensionality on a wide variety of learning tasks. For example, predicting the next word (token) using context lengths of $100,000$ or more should have been impossible even with very large (but still limited) modern training data and computation.  Nevertheless, recent deep learning approaches, including transformer-based large language models, succeed at this task~\cite{claude2, gpt4turbo}.  How are neural networks able to make accurate predictions, while seemingly bypassing the curse of dimensionality, and what properties of real-world datasets make this possible?

The purpose of this paper is to establish  a close connection between an implicit mechanism for dimensionality reduction in neural networks and sparse recovery, which  
 has historically been one of the most influential approaches for coping with the curse of dimensionality~\cite{wright2022high, donoho2006compressed}.  Recently, it has become clear that neural networks can  learn task-relevant low-dimensional structure when using gradient-based training algorithms---a property that is called \textit{feature learning} in the literature~\cite{YangFeatureLearning, FeatureLearningEmergenceShi,DLSRepresentationReLU}.  For example,  neural networks can provably recover a latent low-dimensional index space of multi-index models~\cite{DLSRepresentationReLU, parkinson2023linear}.  Furthermore, deep {\it linear} neural networks can find solutions of classical sparse recovery problems, including sparse linear regression and low rank matrix sensing/completion~\cite{LinearDiagonalNetKernelRichLazy, DeepMatrixFactorization}. Understanding the general principles driving feature learning in neural networks is an active area of research.

\begin{figure}[!t]
    \centering
    \includegraphics[width=\textwidth]{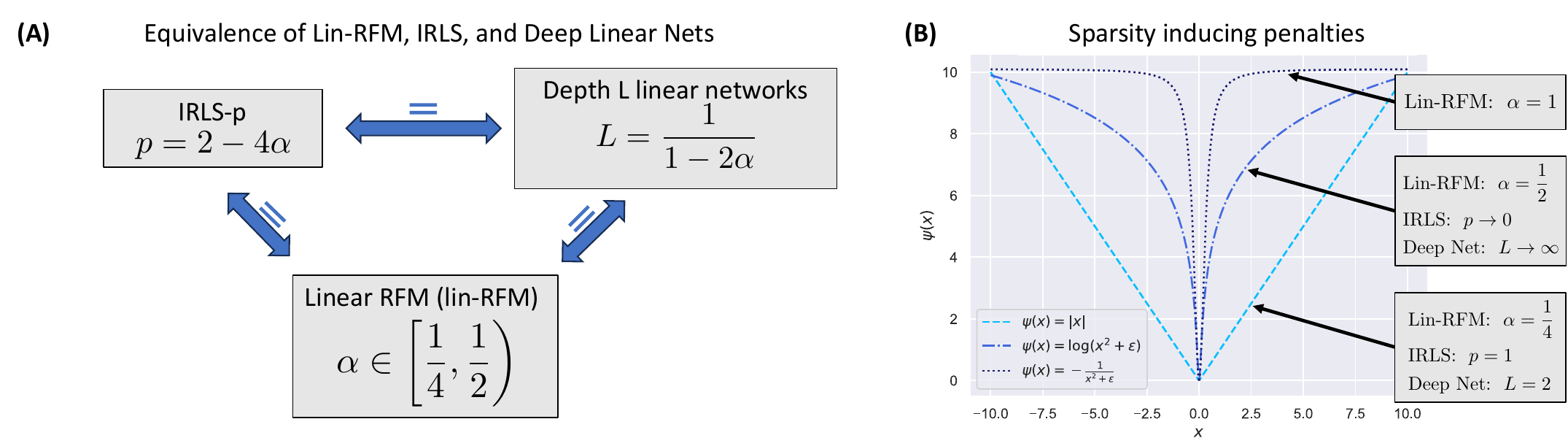}
    \caption{Overview of our results. The functions in \textbf{(B)} are normalized so that $\psi(0) = 0$.}
    \label{fig: Overiew}
\end{figure}

The recent work~\cite{radhakrishnan2022feature} proposed a simple mechanism for how neural networks learn features through training. Namely, feature learning can be elicited from a classical statistical operator called the {\em average gradient outer product (AGOP)}.  The AGOP of a function $f\colon \mathbb{R}^{d} \to \mathbb{R}$ with respect to a set of data points $x_1,\ldots, x_n$ is defined to be the $d\times d$ matrix:  
$
\text{AGOP}(f(x), (x_1, \ldots x_n))  := \frac{1}{n} \sum_{i=1}^n \nabla f(x_i) \nabla f(x_i)^T.
$
Equivalently, $\text{AGOP}(f)$ is the covariance matrix of the gradient $\nabla f(x)$ with respect to the training data. Therefore, AGOP measures the variations of the predictor $f$ and can be used to amplify those directions that are most relevant for learning the output of $f$ or filter out irrelevant directions.

We will use an important modification of AGOP, introduced in \cite{beaglehole2023mechanism}, that is better suited for structured data. Namely, it is often the case that the data $x$ decomposes into blocks that share the same salient features.  For example, a $d_1 \times d_2$ data matrix $x$ can be represented as a vector of its $d_1$ rows. If only the projection of the rows onto a low-dimensional subspace is relevant for the learning task at hand---the case for low-rank matrix recovery---then it is reasonable to seek a single dimensionality reduction mechanism that is shared across all rows.  A similar situation occurs when classifying images using patches, as is commonly done with convolutional neural networks~\cite{beaglehole2023mechanism}.  With these examples in mind, suppose that each training data point $x$ lies in a product space $\mathcal{X}:=\underbrace{\mathbb{R}^{k}\times \ldots\times \mathbb{R}^k}_{m~\text{times}}$. In the matrix example, the factors of the product $\mathcal{X}$ correspond to rows, and for convolutional neural networks, the factors correspond to image patches.  Given a function $f$ defined on $\mathcal{X}$ and data $\{x_i\}_{i=1}^{n} \subset \mathcal{X}$, we may define the AGOP of $f$ as the average of the gradient covariances along the factors: 
\begin{align}
\label{eq: Generalization of AGOP}
\text{AGOP}(f(x), (x_1, \ldots x_n)) := \frac{1}{nm} \sum_{i=1}^{n} \sum_{\ell=1}^{m} \nabla_{\ell} f(x_i) \nabla_{\ell} f(x_i)^T~.
\end{align}
Here, $\nabla_{\ell} f$ is the gradient of $f$ with respect to the $\ell$\textsuperscript{th} factor of the product space $\mathcal{X}$.  This notion of structured AGOP was first introduced in~\cite{beaglehole2023mechanism} in the context of convolutional neural networks, and it will play a central role in our work.  

Treating AGOP as a key ingredient of feature learning, the papers \cite{beaglehole2023mechanism,radhakrishnan2022feature} introduced a new algorithm for explicitly learning features with any base learning algorithm called {\em Recursive Feature Machine (RFM)}. Assume we have a base learning algorithm that produces a predictor $\hat{f}(x)$ given a training set $\{(x_i, y_i)\}_{i=1}^{n}$. Given  an initial filtering matrix $M_0 \in \mathbb{R}^{k \times k}$, RFM iteratively generates (1) a predictor $\hat{f}_t(x)$ and (2) an updated filtering matrix, $M_{t+1}$, according to the following rules: 
\begin{equation}\label{eqn:RFM}
\begin{aligned}
    &\textit{Step 1:} \qquad \text{Obtain a predictor $\hat{f}_t(x)$ by training on filtered data $\{(M_t x_i, y_i)\}_{i=1}^{n}$}~;\\
    &\textit{Step 2:} \qquad \text{Let $M_{t+1} = \phi \left( \text{AGOP($\hat{f}_t(M_tx)$, $(x_1, \ldots x_n)$)} \right)$}.\\
\end{aligned}
\end{equation}
Here, $M_t$ acts independently on all factors $x^{(j)}$ of $x \in \mathcal{X}$, i.e., $M_tx := (M_t x^{(1)}, \ldots, M_t x^{(m)})$ and $\phi\colon \mathbb{R} \to \mathbb{R}$ is any function, which, by convention, acts on symmetric matrices through their eigenvalues. Note that Step 2 always computes AGOP of each predictor $\hat{f}_t$ with respect to the original data.  The work~\cite{radhakrishnan2022feature} demonstrated state-of-the-art performance of RFM for prediction tasks on tabular datasets, when the base algorithm corresponded to kernel machines and $\phi$ was a power function.  Subsequently,  the work~\cite{beaglehole2023mechanism} showed that when the base algorithm corresponded to convolutional kernel machines and $\phi$ was a power function, RFM provided a consistent improvement over convolutional kernels on image classification tasks. Key to this improvement in performance was the fact that AGOP identified low dimensional subspaces relevant for prediction.  Nevertheless, while the RFM framework is simpler to implement and analyze than training algorithms for neural networks, there has been no precise theoretical analysis for the  performance of RFM, nor an exact characterization for the emergence of low dimensional structure through AGOP.

In this work, we develop the first theoretical guarantees for how RFM performs dimensionality reduction by focusing on the classical settings of sparse linear regression and low rank matrix recovery. Setting the stage, consider a ground truth matrix $Y_{\sharp}\in\mathbb{R}^{d_1\times d_2}$ with rank $r$ that is much smaller than $\min\{d_1,d_2\}$. The goal of low-rank matrix recovery is to recover $Y_{\sharp}$ from linear measurements
$$y_i:=\langle A_i, Y_{\sharp}\rangle\qquad \textrm{for}\qquad \forall i \in [n].$$
Here, $A_i$ are called the sensing matrices and $\langle X,Y\rangle=\Tr(X^\top Y)$ denotes the standard trace inner product. In the context of learning, we consider the pairs $\{(A_i,y_i)\}_{i=1}^n$ as the training data. Notice that the measurement $\langle A_i, Y_{\sharp}\rangle$ remains unchanged if the rows of $A_i$ are projected onto the row space of $Y_{\sharp}$. Therefore we may think of prediction functions $f(A)$ taking the rows of $A$ as arguments, i.e. operating on $d_1$-fold product of $\mathbb{R}^{d_2}$.
We will see that a simple instantiation of RFM with linear predictors proceeds according to the  rule:
\begin{equation}\label{eqn:mat_RFM}
\begin{aligned}
     \textit{(Linear Regression):} \quad W_t &= \argmin_{W \in \mathbb{R}^{d_1 \times d_2}}~ \|W\|^2_F  ~~ ~\text{subject to ~~$\langle  A_i M_t, W   \rangle  = y_i$~~ $\forall i \in [n]$}~; \\
     \textit{(AGOP Update):} \quad M_{t+1} &= \phi \left( M_t^T W_t^T W_t M_t \right)~.
\end{aligned}\tag{{lin-RFM}}
\end{equation}
The problem of sparse linear regression corresponds to the special case where all the matrices are diagonal; \ref{eqn:mat_RFM} applies in this setting without any modification.  Our contributions are as follows.

\begin{enumerate}
\item {\bf (Explicit characterization of fixed points of lin-RFM)}
We show that the fixed points of \ref{eqn:mat_RFM} are precisely the first-order critical points of the problem  
\begin{align}
    \label{eq: General lin-RFM objective}
        &\text{$\argmin_{Z \in \mathbb{R}^{d_1 \times d_2}} ~\sum_i \psi(\sigma_i(Y))$ \qquad subject to \quad$\langle A_i, Y\rangle = y_i$ \quad $\forall i \in [n]$,}
    \end{align}    
    where $\sigma_i(Y)$ are the singular values of $Y$ and $\psi(t)$ is explicitly given as $\psi(s)=\int_{0}^s\frac{r~dr}{\phi(r^2)^2}$, as long $\phi$ is continuous and takes only positive values.  In the  case of power functions $\phi(s)=(s+ \epsilon)^{\alpha}$ with $\epsilon,\alpha>0$, the equality holds:
  $$  \begin{array}{cc}
 \psi(s)=  \left\{ 
    \begin{array}{cc}
      \frac{1}{2-4\alpha} (s^2+\epsilon)^{1-2\alpha} & {\rm if}~\alpha\neq \frac{1}{2}\\
      \frac{1}{2}\log(s+ \epsilon) & {\rm if}~\alpha=\frac{1}{2}
    \end{array}\right.
    \end{array}.$$
    Here, the constant offset $\epsilon>0$ can be arbitrarily small.  Nonzero $\epsilon$ is needed for our theoretical analysis.  We note, however, that in our experiments on the matrix completion problem, the performance of RFM remains unchanged when setting $\epsilon=0$ for any value of $\alpha$.    Summarizing, RFM spans different sparse objectives including logarithmic, power functions, or negative power functions of the singular values, with larger values of $\alpha$ more aggressively promoting $Y$ to have low-rank. See Figure~\ref{fig: Overiew} for an illustration. An interesting consequence is that for $\alpha \in \left[\frac{1}{4}, \frac{1}{2}\right)$,  lin-RFM and deep linear neural networks are equivalent in the following sense: the fixed points of lin-RFM correspond to minimizing the $\ell_p$ (pseudo) norm of singular values for $p \in (0, 1]$, which is exactly the implicit bias of deep linear neural networks with depth $L = \frac{1}{1-2\alpha}$~\cite{gunasekar2018implicitconv, shang2020unified}.     

\item{{\bf (Neural Feature Ansatz (NFA) and deep lin-RFM)}} The two papers~\cite{radhakrishnan2022feature, beaglehole2023mechanism} empirically observed that the weights of a trained neural network align with AGOP of the neural network on a wide array of learning tasks. The authors called this phenomenon the Neural Feature Ansatz (NFA).  In this work, we prove that NFA is indeed true in our context: the covariance matrix of weights in layer $\ell$ of a depth $L$ linear neural network $f(A)=\langle A, W_LW_{L-1}\ldots W_1\rangle$ trained by gradient flow and the AGOP of the network taken with respect to the input of layer $\ell$ are equal when using the scaling $\phi(s)=s^{\frac{1}{L - i + 1}}$.  We then introduce deep lin-RFM, a variant of lin-RFM using deep linear predictors that accurately captures the implicit bias of deep linear networks when using AGOP scaling suggested by the deep NFA.

\item {\bf (RFM generalizes IRLS)}
    When $\phi$ is a power with $\phi(s) = (s + \epsilon)^\alpha$ and $\alpha\in (0,\frac{1}{2}]$, we recognize lin-RFM as a re-parameterization of the classical Iteratively Reweighted Least Squares (IRLS-$p$) algorithm from~\cite{MatrixCompletionIterativeLeastSquaresMohan} with $p = 2 - 4\alpha$. Therefore, all convergence guarantees for IRLS-$p$ apply directly.

\item {\bf (SVD-free implementation of lin-RFM)} We present an SVD-free implementation of lin-RFM when $\phi$ is a  power that is a multiple of $\frac{1}{2}$ that scales to matrices with millions of missing observations. Our implementation is faster than IRLS and results in improved empirical performance over deep linear networks for matrix completion tasks.  To the best of our knowledge, when $\phi(s) = s^{\frac{1}{2}}$, our algorithm is the first SVD-free formulation of IRLS-$0$ corresponding to log determinant minimization.
\end{enumerate}

Our results demonstrate that lin-RFM finds sparse/low rank solutions by explicitly utilizing update rules arising from the fixed point equations of explicit regularized objectives.   In contrast, deep learning algorithms do not utilize such explicit updates and instead, rely on implicit biases arising through gradient-based training methods.  As a result, it is not always the case that deep neural networks trained using gradient descent find sparsity inducing solutions.  For example, the flat minima of deep linear diagonal networks correspond to solutions that induce less sparsity than minimizing $\ell_1$ norm~\cite{ding2022flat}.  Similarly, choosing small step size with randomly initialized deep linear diagonal networks can result in solutions that are less sparse than minimizing $\ell_1$ norm~\cite{LinearDiagonalNetsStepSize}. Our results establish an intriguing connection between feature learning in neural networks and sparse recovery. Furthermore, our finding that the specialization of RFM to linear models leads to a generalization of classical sparse learning algorithms provides further evidence for the centrality of AGOP and NFA to feature learning in neural networks.  

\paragraph{Paper outline.}  We begin with Section~\ref{sec: Related work} outlining work related to low-rank matrix recovery.  In Section~\ref{sec: lin-RFM algorithm and fixed point equation}, we introduce lin-RFM, establish that lin-RFM is a generalization of IRLS, and analyze the fixed point equation of lin-RFM, thereby illustrating how lin-RFM finds low-rank solutions.  In Section~\ref{sec: AGOP and ansatz and Deep lin RFM}, we prove the NFA for deep linear networks and introduce deep lin-RFM, a variant of lin-RFM using deep linear predictors that accurately captures the implicit bias of deep linear networks.  Section~\ref{sec: SVD free lin-RFM and experiments} introduces an SVD-free version of lin-RFM when $\phi(s) = s^{\alpha}$ where $\alpha$ is an integer multiple of $\frac{1}{2}$. This version of lin-RFM is faster than IRLS as it is SVD-free, and we show that it outperforms deep linear networks for sparse linear regression and matrix completion tasks.  We conclude  in Section~\ref{sec: conclusion}.

\subsection{Related work on sparse and low-rank recovery}
\label{sec: Related work}
\paragraph{Classical sparse recovery algorithms.}  A large body of literature has developed and analyzed techniques to overcome the curse of dimensionality by identifying sparse structure from data.  Examples of algorithms for variable selection in linear regression include Lasso~\cite{Lasso}, $\ell_1$ and iterated $\ell_1$ minimization~\cite{candes2008enhancing, donoho1989uncertainty}, utilizing non-convex sparsity inducing penalties~\cite{fan2001variable}, and IRLS~\cite{IRLSSparseRecovery, chartrand2008iteratively, merle1974computational, li1993globally}.  Algorithms for low rank matrix completion include minimizing nuclear norm~\cite{CandesRechtMatrixCompletion, MatrixCompletionNuclearNormTao, MinNuclearNormMinRank}, hard singular value thresholding methods~\cite{SVThresholdingMatrixCompletion} and IRLS-$p$~\cite{MatrixCompletionIterativeLeastSquaresWard, MatrixCompletionIterativeLeastSquaresMohan}.  Much of this  work has been motivated by the fact that minimum $\ell_1$ norm solutions (analogously, minimum nuclear norm solutions for matrix completion) provably lead to exact recovery of sparse solutions under certain assumptions on the data (including the Restricted Isometry Property (RIP))~\cite{CandesRechtMatrixCompletion, candes2005decoding, donoho2006compressed}. 

\paragraph{Deep linear networks.} It has been observed that over-parameterized, deep linear neural networks trained using gradient descent can in practice perform sparse recovery, often empirically exhibiting improved sample efficiency over minimizing $\ell_1$ or nuclear norm~\cite{LinearDiagonalNetKernelRichLazy, MatrixFactorizationNIPS2017, DeepMatrixFactorization}.  Indeed, it has been proved that depth $L$ linear neural networks that fit data while minimizing $\ell_2$ norm of network weights implement linear predictors that minimize an $\ell_p$ pseudo norm with $p=\frac{2}{L}$~\cite{shang2020unified, gunasekar2018implicitconv, dai2021representation}.  For example, when $L = 2$, this corresponds to classical $\ell_1$ minimization.  Under special conditions on initialization and learning rate, these solutions can be found by gradient descent~\cite{gunasekar2018implicitconv, MatrixFactorizationNIPS2017}.  In general, it is not known when gradient descent arrives at sparsity inducing solutions.   For example, the work~\cite{LinearDiagonalNetsStepSize} showed that choosing small step size with randomly initialized linear diagonal networks can result in solutions with less sparsity than minimizing $\ell_1$ norm.  Moreover, the recent work~\cite{ding2022flat} proved that the flat minima for fitting linear diagonal networks with more than two layers corresponds to solutions that are inferior to those obtained by $\ell_1$ norm minimization.  

\paragraph{Low rank structure in neural networks.} Recent work has demonstrated that weight matrices of deep nonlinear networks trained using gradient-based methods are often low rank.  This low dimensional structure arises and has been exploited to rapidly fine-tune transformer based language models~\cite{aghajanyan2020intrinsic, hu2022lora}.  A recent line of work~\cite{DLSRepresentationReLU, 22abbe_staircase, NeuralNetsMultiIndexSGD} has theoretically analyzed the emergence of low rank structure in neural networks trained using gradient descent or its variations in the context of multi-index models.  Another line of work has analyzed the implicit bias of nonlinear fully connected networks that fit data while minimizing the $\ell_2$ norm of network weights~\cite{jacot2023bottleneck, jacot2023implicit, parkinson2023linear}.  Under this condition on the weights, the works~\cite{jacot2023bottleneck, jacot2023implicit} identify notions of rank that arise when considering infinitely deep networks.  The work~\cite{parkinson2023linear} defines a notion of rank for fully connected networks with one nonlinearity using the rank of their expected gradient outer product and demonstrates that adding linear layers induces the emergence of low rank structure.  These analyses provide significant insight into the capabilities of neural networks. 
Nevertheless, to the best of our knowledge, there are no guarantees that gradient descent finds solutions that minimize $\ell_2$ norm of network weights. On the other hand, the works~\cite{radhakrishnan2022feature, beaglehole2023mechanism} empirically demonstrate that AGOP accurately captures the emergence of low dimensional structure in weight matrices of fully connected networks and state-of-the-art convolutional networks through training.

\section{Lin-RFM algorithm and fixed point equation}
\label{sec: lin-RFM algorithm and fixed point equation}

\subsection{Problem setup}
\label{sec: problem setup}

In this work, we consider the standard problem of low-rank matrix recovery.  Let $Y_{\sharp} \in \mathbb{R}^{d_1 \times d_2}$ denote a rank $r$ matrix, and suppose without loss of generality $d_1\leq d_2$.  The goal is to recover $Y_{\sharp}$ from a set of $n$ measurements of the form $y_i=\langle A_i, Y_{\sharp} \rangle$, where $\langle A_i, Y_{\sharp} \rangle$ denotes the standard trace inner product $\langle A, Y \rangle = \Tr(A^T Y)$. That is, we would like to recover $Y_{\sharp}$ from the feature/label pairs $\{(A_i, y_i) \}_{i=1}^{n}$ under the auxiliary information that $Y_{\sharp}$ has a low rank.  The matrices $\{A_i\}_{i=1}^{n}$ are called measurement or sensing matrices in the literature.  In our examples and experiments, we will primarily focus on the {\it matrix completion} problem where the measurement matrices are indicator matrices $M_{ij}$ containing a $1$ in coordinate $(i,j)$ and zeros elsewhere.  Note that $\langle M_{ij}, Y_{\sharp} \rangle = {Y_{\sharp}}_{ij}$, and our goal reduces to filling in missing entries in $Y_{\sharp}$ after observing $n$ entries.  

\medskip

\subsection{Lin-RFM algorithm}
\label{sec: lin-rfm algorithm}

\noindent We start by introducing the lin-RFM algorithm for low-rank matrix recovery. In this setting, we focus on linear prediction functions
$$\mathcal{F}:=\{f(A)=\langle AM,W\rangle: W\in \mathbb{R}^{d_1\times d_2},~ M\in \mathbb{R}^{d_2\times d_2}\}.$$
As explained in the introduction, we will think of $f$ as acting on the product space $\mathcal{X}:=\underbrace{\mathbb{R}^{d_2}\times \ldots\times \mathbb{R}^{d_2}}_{d_1~\text{times}}$, with the factors encoding the rows of the matrix. The AGOP of $f$ can then be readily computed from Eq.~\eqref{eq: Generalization of AGOP}. Namely, setting $Z:=WM$ and letting $e_{\ell} \in \mathbb{R}^{d_1}$ denote a standard basis vector, whose coordinates are zero except the $\ell$\textsuperscript{th} coordinate which is $1$, we compute
\begin{align*}
\text{AGOP}(f, (A_1, \ldots A_n)) &= \frac{1}{n d_2} \sum_{i=1}^{n} \sum_{\ell=1}^{d_2} \nabla_{\ell} f(A_i) \nabla_{\ell} f(A_i)^T
=\frac{1}{n d_2} \sum_{i=1}^{n} \sum_{\ell=1}^{d_2} Z^{\top} 
 e_{\ell} e_{\ell}^\top Z=Z^\top Z.
\end{align*}
Therefore a direct application of RFM~\eqref{eqn:RFM}, where the prediction (Step 1) is implemented by finding the smallest Frobenius norm matrix $W$ that interpolates the measurements yields the following algorithm.

\begin{tcolorbox}
\begin{alg}[Lin-RFM]  Let $M_0 = I$ and for $t \in \mathbb{Z}_+$ repeat the steps:
\label{alg: RFM matrix completion}
\begin{align*}
    & \textit{Step 1 (Linear regression):} \qquad W_t = \argmin_{W \in \mathbb{R}^{d_1 \times d_2}}~ \|W\|^2_F  ~~ \text{subject to $\langle A_i M_t , W \rangle  = y_i$ for all $i \in [n]$}~; \\
    & \textit{Step 2 (AGOP Update):} \qquad M_{t+1} = \phi \left( M_t^T W_t^T W_t M_t \right)~.
\end{align*}
\end{alg}
\end{tcolorbox}

When $\phi$ is a power function, $\phi(s) = s^{\alpha}$,  lin-RFM is a re-parameterized version of the prominent IRLS-$p$ algorithm with $p = 2 - 4\alpha$ (see Appendix~\ref{appendix: Reformulation of lin-RFM as IRLS}).  Thus for specific choices of $\phi$, lin-RFM already provides a provably effective approach to low-rank matrix recovery.  In particular, the convergence analyses for IRLS from~\cite{ MatrixCompletionIterativeLeastSquaresMohan} automatically establish convergence for lin-RFM with $\phi(s) = s^{\alpha}$, $\alpha \in \left[ \frac{1}{4}, \frac{1}{2}\right]$, and $M_0 = I$. Our formulation, however, easily extends lin-RFM to low-rank inducing objectives corresponding to negative values of $p$, which have not been considered for linear regression with the exception of~\cite{rao1999affine} and to the best of our knowledge, not at all for matrix completion tasks.    Moreover, in Section~\ref{sec: SVD free lin-RFM and experiments} we will present an efficient, SVD-free implementation of lin-RFM that is particularly effective for matrix completion.

\begin{remark}{\rm The problem of sparse linear regression corresponds to the setting where $Y_{\sharp}$ and $A_i$ are diagonal matrices. In this setting, the update matrices $M_t$ remain diagonal as well. Therefore, Lin-RFM applies directly to problems of sparse linear regression and again 
reduces to IRLS. See Appendix~\ref{appendix: lin-RFM linear reg} for details. }
\end{remark}

\subsection{Fixed point equation of lin-RFM}
\label{sec: fixed point equation of lin-RFM}

We now derive the fixed point equation of lin-RFM, which shows that the fixed points of lin-RFM correspond to the critical points of sparsity inducing objectives depending on the choice of operator $\phi$ applied to AGOP.  In our analysis, we require $\phi\colon\mathbb{R}\to (0,\infty)$ to be a continuous function. By convention,
$\phi$ operates on matrices through eigenvalues, i.e., each eigenvalue, $\lambda_i$, of AGOP is transformed to $\phi(\lambda_i)$ while eigenvectors remain unchanged.  

It will be convenient to rewrite the iterations of 
Algorithm~\ref{alg: RFM matrix completion} as follows. Observe that since $\phi$ takes only positive values, the matrix $M_t$ is nonsingular. Consequently, upon making the variable substitution $Z_t:=W_tM_t$ the iterations of Algorithm~\ref{alg: RFM matrix completion} can be purely written in terms of the evolution of $Z_t$:
\begin{equation}\label{eqn:simplerewrite}
Z_{t+1} = \argmin_{Z \in \mathbb{R}^{d_1 \times d_2}}~ \|Z\phi(Z^\top_t Z_t)^{-1}\|^2_F  \qquad \text{subject to \qquad$\langle A_i , Z \rangle  = y_i$ ~~ $\forall i \in [n]$}.
\end{equation}
Throughout, we will let $\sigma(Z)$ denote the vector of singular vectors of $Z$ in nondecreasing order.

The following theorem characterizes the fixed point equations of algorithm \eqref{eqn:simplerewrite}  as minimizers of a regularized optimization problem \eqref{eqn:regular}. 

\begin{theorem}
\label{theorem: Fixed point equation of lin-RFM}
The fixed points $Z$ of \eqref{eqn:simplerewrite} are first-order critical points of the optimization problem: 
    \begin{equation}\label{eqn:regular}
        \text{$\argmin_{Z \in \mathbb{R}^{d_1 \times d_2}} \sum_{j=1}^{d_1} \psi(\sigma_j(Z)) $\qquad subject to\qquad $\langle Z, A_i \rangle = y_i$ for $i \in [n]$,} 
    \end{equation}    
where we define the function $\psi(r) = \int_{0}^{r} \frac{s}{[\phi(s^2)]^2} ds$ for all $r \in \mathbb{R}$.
\end{theorem}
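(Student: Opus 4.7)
The proof plan is to show that both the fixed-point condition for \eqref{eqn:simplerewrite} and the first-order criticality condition for \eqref{eqn:regular} reduce to the identical KKT system
\[
Z\,\phi(Z^\top Z)^{-2} = \sum_{i=1}^n \lambda_i A_i, \qquad \langle A_i, Z\rangle = y_i \ \text{for all}\ i \in [n],
\]
for some Lagrange multipliers $\lambda_i \in \mathbb{R}$. Because $\phi$ is continuous and strictly positive, $\phi(Z^\top Z)$ is symmetric positive definite, so the matrix inverses and stationarity conditions below are all well-defined.

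For the iteration side, I first rewrite $\|Z\,\phi(Z_t^\top Z_t)^{-1}\|_F^2 = \mathrm{tr}\bigl(Z^\top Z\,\phi(Z_t^\top Z_t)^{-2}\bigr)$, revealing the subproblem defining $Z_{t+1}$ as a strictly convex quadratic in $Z$ (since $\phi(Z_t^\top Z_t)^{-2}$ is positive definite). Its unique minimizer under the affine constraints is characterized by the Lagrangian stationarity condition $Z\,\phi(Z_t^\top Z_t)^{-2} = \sum_i \lambda_i A_i$ together with primal feasibility. A fixed point corresponds to $Z_t = Z_{t+1} = Z$, which immediately yields the displayed KKT system.

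For the regularized problem, the key step is to compute $\nabla G(Z)$ where $G(Z) := \sum_{j=1}^{d_1} \psi(\sigma_j(Z))$. By the Lewis--Sendov formula for differentiation of orthogonally invariant matrix functions, for any SVD $Z = U\Sigma V^\top$ one has $\nabla G(Z) = U\,\mathrm{diag}(\psi'(\sigma_j(Z)))\,V^\top$. Substituting $\psi'(s) = s/\phi(s^2)^2$ gives $\nabla G(Z) = U\,\mathrm{diag}(\sigma_j/\phi(\sigma_j^2)^2)\,V^\top$. A direct SVD computation of $Z\,\phi(Z^\top Z)^{-2}$---using $\phi(Z^\top Z) = V\,\mathrm{diag}(\phi(\sigma_j^2))\,V^\top$ with $\phi(0)$ padding the $d_2 - d_1$ trailing zero eigenvalues when $d_1 < d_2$---produces exactly the same matrix. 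Matching this with the Lagrange condition $\nabla G(Z) = \sum_i \mu_i A_i$ for \eqref{eqn:regular} closes the proof.

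The principal obstacle is justifying the Lewis--Sendov gradient formula at matrices with repeated or vanishing singular values, where the individual $\sigma_j(Z)$ are not themselves smooth functions of $Z$. The standard sufficient condition is that the symmetric function $(\sigma_1,\ldots,\sigma_{d_1}) \mapsto \sum_j \psi(\sigma_j)$ be continuously differentiable, which here follows from $\psi \in C^1(\mathbb{R})$ together with $\psi'(0) = 0$---both immediate consequences of $\phi$ being continuous and strictly positive. Once this regularity is in hand, the remaining matrix-algebra identifications are routine.
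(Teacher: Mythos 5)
Your proposal is correct and follows essentially the same route as the paper: both identify the fixed-point condition of the reweighted least-squares subproblem with the KKT system of the regularized problem, using the spectral-derivative formula $\nabla\bigl(\sum_j\psi(\sigma_j(\cdot))\bigr)(Z)=U\,\mathrm{diag}(\psi'(\sigma_j))\,V^\top=Z\,\phi(Z^\top Z)^{-2}$ (the paper cites Lewis's Theorem~1.1 for exactly this step). Your treatment is somewhat more careful than the paper's on two points it leaves implicit---the explicit SVD verification including the $\phi(0)$ padding when $d_1<d_2$, and the $C^1$/evenness regularity of $\psi$ (with $\psi'(0)=0$) needed to apply the spectral differentiation result at repeated or vanishing singular values---but the argument is the same.
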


\begin{proof}
Note that $\psi$ is well-defined and $C^1$-smooth with $\psi'(r)=\frac{r}{[\phi(r^2)]^2}$, since $\phi$ is strictly positive and continuous.
Let $Z$ be a fixed point of the iteration \eqref{eqn:simplerewrite} and let $\mathcal{A}$ denote the linear map encoding the constraints of \eqref{eqn:simplerewrite}, that is $\mathcal{A}(Z)=(\langle A_i,Z\rangle, \ldots,\langle A_n,Z\rangle)$. Then first-order optimality conditions for \eqref{eqn:simplerewrite} imply that there exists a vector of multipliers $\lambda\in\mathbb{R}^n$ satisfying
\begin{equation}\label{eqn:optcond}
    \begin{aligned}
        2Z\phi(Z^{\top}Z)^{-2}+ \mathcal{A}^*(\lambda) &= 0 ~; \\
        \mathcal{A}(Z) &= y~.
    \end{aligned}
    \end{equation}
We claim now that $Z\phi(Z^{\top}Z)^{-2}$ is the gradient of the function $F(Z):=\sum_{j=1}^{d_1} \psi(\sigma_j(Z))$. Indeed the result in \cite[Theorem 1.1]{LewisSpectralDerivatives} tells us that 
$F$ is differentiable, since $\psi$ is differentiable, and moreover we have $\nabla F(Z)=\psi'(Z)=Z\phi(Z^{\top}Z)^{-2}$. Thus, $\eqref{eqn:optcond}$ reduce to the optimality conditions for the problem \eqref{eqn:regular}.  
\end{proof}

Intuitively, the above minimization problem can induce sparsity if $\psi(r)$ is a function that is minimized at $r = 0$ and increases rapidly away from $r = 0$.  A particularly important setting, which we consider next corresponds to power functions $\phi(s) = (s+\epsilon)^{\alpha}$. 

\begin{corollary}
\label{corollary: Fixed point matrix powers}
   Set $\phi(s) := (s + \epsilon)^{\alpha}$ for arbitrary choice of $\epsilon > 0$ and $\alpha>0$. 
   Then, the fixed points $Z$ of \eqref{eqn:simplerewrite} are first-order critical points of the following optimization problems depending on the value of $\alpha$: 
    \begin{align}
        &\text{If $\alpha \neq \frac{1}{2}$:}~~ \argmin_{Z \in \mathbb{R}^{d_1 \times d_2}} ~\text{ $\sum_{j=1}^{d_1} \frac{1}{2 - 4\alpha} $}(\sigma_j^2(Z) + \epsilon)^{1 - 2\alpha} ~~ \text{subject to $\langle Z, A_i \rangle = y_i$ for $i \in [n]$;} \\
        &\text{If $\alpha = \frac{1}{2}$:}~~ \argmin_{Z \in \mathbb{R}^{d_1 \times d_2}} ~\text{$\sum_{j=1}^{d_1}\log(\sigma_j^2(Z) + \epsilon)$} ~~ \text{subject to $\langle Z, A_i \rangle = y_p$ for $i \in [n]$.} \nonumber 
    \end{align}
\end{corollary}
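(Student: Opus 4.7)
The plan is to invoke Theorem~\ref{theorem: Fixed point equation of lin-RFM} directly and then reduce to an elementary integral computation. Since $\phi(s)=(s+\epsilon)^{\alpha}$ with $\epsilon,\alpha>0$ is continuous and strictly positive on $[0,\infty)$, the hypotheses of the theorem are satisfied, so the fixed points of \eqref{eqn:simplerewrite} coincide with the first-order critical points of
$$\min_{Z\in\mathbb{R}^{d_1\times d_2}}\ \sum_{j=1}^{d_1}\psi(\sigma_j(Z))\qquad\text{subject to}\qquad \langle A_i,Z\rangle=y_i\ \ \forall i\in[n],$$
where $\psi(r)=\int_0^r \frac{s}{[\phi(s^2)]^2}\,ds = \int_0^r \frac{s}{(s^2+\epsilon)^{2\alpha}}\,ds$.

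Next I would evaluate this integral by the substitution $u=s^2+\epsilon$, so that $du=2s\,ds$ and the bounds become $u=\epsilon$ and $u=r^2+\epsilon$. This gives
$$\psi(r)=\tfrac{1}{2}\int_{\epsilon}^{r^2+\epsilon}u^{-2\alpha}\,du.$$
Splitting on whether the antiderivative of $u^{-2\alpha}$ is a power or a logarithm yields, for $\alpha\neq\tfrac{1}{2}$,
$$\psi(r)=\tfrac{1}{2(1-2\alpha)}\bigl[(r^2+\epsilon)^{1-2\alpha}-\epsilon^{1-2\alpha}\bigr]=\tfrac{1}{2-4\alpha}(r^2+\epsilon)^{1-2\alpha}+c_\alpha,$$
and for $\alpha=\tfrac{1}{2}$,
$$\psi(r)=\tfrac{1}{2}\log(r^2+\epsilon)-\tfrac{1}{2}\log\epsilon.$$

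Finally, I would observe that adding a constant to the objective, or rescaling it by a positive constant, preserves the set of first-order critical points under linear equality constraints. Substituting $r=\sigma_j(Z)$ and summing over $j$ therefore shows that the fixed-point condition is exactly the critical-point condition for
$$\min_Z \ \sum_{j=1}^{d_1}\tfrac{1}{2-4\alpha}(\sigma_j^2(Z)+\epsilon)^{1-2\alpha}\quad\text{when }\alpha\neq\tfrac{1}{2},$$
and for $\sum_{j=1}^{d_1}\log(\sigma_j^2(Z)+\epsilon)$ when $\alpha=\tfrac{1}{2}$, subject to $\langle A_i,Z\rangle=y_i$ for all $i$, which is the claimed statement.

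There is no real obstacle here: the only things to be careful about are (i) that the substitution is valid for all $\alpha>0$ because $\epsilon>0$ keeps the integrand bounded at $s=0$, and (ii) that dropping the additive constants $c_\alpha$ and $-\tfrac{1}{2}\log\epsilon$, as well as the harmless factor $\tfrac{1}{2}$ in front of the logarithm, does not alter the first-order critical points under the affine constraint set.
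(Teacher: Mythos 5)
Your proposal is correct and follows the same route as the paper, which simply notes that the corollary ``follows immediately from integrating $\psi'(r) = \frac{r}{(r^2+\epsilon)^{2\alpha}}$''; you carry out that integration explicitly via the substitution $u = s^2+\epsilon$ and correctly observe that the additive constants and the positive factor $\tfrac{1}{2}$ do not change the first-order critical points. Nothing is missing.
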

\noindent Note that the factor $\frac{1}{2 - 4\alpha}$ is negative for $\alpha>1/2$.  
\paragraph{Remarks.} The proof of Corollary~\ref{corollary: Fixed point matrix powers} follows immediately from integrating $\psi'(r) = \frac{r}{(r^2 + \epsilon)^{2\alpha}}$.   We note that the $\epsilon$ term is not necessary in practice for convergence in matrix completion tasks even when $\psi(r)$ has a singularity.  Indeed, in Appendix~\ref{sec: lin-RFM matrix completion convergence}, we present examples for matrix completion in which Algorithm~\ref{alg: RFM matrix completion} provably converges with $\epsilon = 0$ and $\psi(r) = -\frac{1}{2r^2}$ when $\alpha = 1$. These examples show that lin-RFM can recover low rank solutions when minimizing nuclear norm does not.  In Appendix~\ref{sec: lin-RFM matrix completion convergence}, we also show that lin-RFM can exhibit behavior similar to deep linear networks used for matrix completion that do not learn solutions that minimize $\ell_2$ norm of weights, including those from~\cite{razin2020implicit}.  For matrix completion problems, we conjecture that lin-RFM with $\epsilon = 0$ is able to recover the ground truth low rank matrix $Y_{\sharp}$ with high probability under standard incoherence assumptions and with sufficiently many observed entries. Note that values of $\alpha \leq \frac{1}{4}$ recover fixed point equations of objectives corresponding to spectral $\ell_p$ norm minimization for $p \in [1, \infty)$.  In particular, for $\alpha = \frac{1}{4}$, we recover the widely used nuclear norm minimization objective.  When $\alpha = \frac{1}{2}$, we recover the fixed point equation of the log determinant objective, which has been analyzed previously as a heuristic for rank minimization~\cite{fazel2003logdet}.  This objective also arises by considering the limit as $p \to 0$ in the IRLS-$p$ algorithm~\cite{MatrixCompletionIterativeLeastSquaresMohan}.  

 Note that lin-RFM is a generalization of IRLS.  In particular, lin-RFM with $\phi(s) = (s + \epsilon)^{\alpha}$ is a re-parameterization of IRLS-$p$ \cite{MatrixCompletionIterativeLeastSquaresMohan} with $p = 2-4\alpha$.  For $\alpha > \frac{1}{2}$, lin-RFM corresponds to IRLS-$p$ with negative values of $p$, which has not been used in the literature to the best of our knowledge.  Observe that lin-RFM and IRLS find sparsity inducing solutions by explicitly utilizing update rules based on the fixed point equation of sparsity inducing objectives.  In contrast, deep linear networks are implicitly biased towards such sparsity inducing solutions \textit{if} gradient descent finds solutions which minimize $\ell_2$ norm of the weights.  Under this weight minimization condition, depth $L$ linear neural networks for $L \in \{2, 3, \ldots \}$ used for matrix completion learn matrices, $\hat{W}$, with minimum $\ell_{2/L}$ (pseudo-)norm on singular values~\cite{shang2020unified, gunasekar2018implicitconv}.

\section{Deep Neural Feature Ansatz for deep linear networks and deep lin-RFM}
\label{sec: AGOP and ansatz and Deep lin RFM}

Deep Neural Feature Ansatz (NFA) was introduced in~\cite{radhakrishnan2022feature} as a guiding principle that led to the development of RFM. It was shown to hold approximately in various architectures of neural networks~\cite{radhakrishnan2022feature, beaglehole2023mechanism}.  In this section, we prove that the NFA holds exactly for the special case of deep linear networks used for low-rank matrix recovery.  Our result connects deep linear networks, lin-RFM, and consequently IRLS, by demonstrating that all models utilize AGOP in their updates.  The NFA connected covariance matrices of weights in trained neural networks with the AGOP as follows.  
\begin{tcolorbox}
\textbf{Deep Neural Feature Ansatz (NFA).} Given an elementwise nonlinearity $\sigma: \mathbb{R} \to \mathbb{R}$, let $f(x)$ denote a depth $L$ neural network of the form 
\begin{align}
\label{eq: deep neural net}
    f(x) = h_{L}(x) ~ ; ~ h_{\ell}(x) = W^{(\ell)} \sigma( h_{\ell-1}(x) ) ~ \text{for $\ell \in \{2, 3, \ldots, L\}$};
\end{align}
where $h_1(x) = W^{(1)}x$.  Suppose that $f$ is trained using gradient descent on data $(x_1, y_1), \ldots, (x_n, y_n) \in \mathbb{R}^{d} \times \mathbb{R}$.  Let $f_t$ denote the predictor and let $W_t^{(\ell)}$ for $\ell \in [L]$ denote the weights after $t$ steps of gradient descent.  Then,
\begin{align}
\label{eq: NFA}
    {W_t^{(\ell)}}^T {W_t^{(\ell)}} \propto \left(\text{AGOP}\left(f_t, (g_{\ell}(x_1) , \ldots, g_{\ell}(x_n))\right)\right)^{\alpha_{\ell}}~;\tag{{NFA}}
\end{align}
for matrix powers $\alpha_{\ell} > 0$ where $g_{\ell}(x) := \sigma(h_{\ell-1}(x))$ for $\ell \in \{2, 3, \ldots, L\}$ and $g_1(x) := x$.  
\end{tcolorbox}
In the case of deep linear networks used for matrix sensing, Eq.~\eqref{eq: deep neural net} reduces to $f(A) = \langle A, W_L \ldots W_1 \rangle$.  Noting that $W_{\ell}^T$ operates on the rows of $AW_1^T \ldots W_{\ell-1}^T W_{\ell}^T$ and repeating the argument from Section~\ref{sec: lin-rfm algorithm}, the AGOP is given by ${W^{(\ell)}}^{T} \ldots {W^{(L)}}^T W^{(L)} \ldots W^{(\ell)}$.  Hence, NFA amounts to the assertion
\begin{align}
\label{eq: NFA deep linear network}
    {W_t^{(\ell)}}^T {W_t^{(\ell)}} \propto \left({W_t^{(\ell)}}^{T} \ldots {W_t^{(L)}}^T W_t^{(L)} \ldots W_t^{(\ell)} \right)^{\alpha_{\ell}}~\qquad \forall \ell\in [L].
\end{align}
The following theorem shows that the NFA indeed holds with the specific powers $\alpha_{\ell} = \frac{1}{L - \ell + 1}$. 
\begin{theorem}
\label{thm: NFA matrix completion}    
    Let $f(A) =  \langle A, W^{(L)} W^{(L-1)} \ldots W^{(1)}\rangle$ denote an $L$-layer linear network.  Let $f_t$ denote the network trained for time $t$ by continuous-time gradient flow on the mean square loss. If the initial weight matrices $\{W_0^{(\ell)}\}_{\ell=1}^{L-1}$ are balanced in the sense that ${W_0^{(\ell)}} {W_0^{(\ell)}}^T = {W_0^{(\ell+1)}}^T {W_0^{(\ell+1)}}$, then for any $t > 0$,
    \begin{align}
        \label{eq: ansatz deep linear nets for matrix completion}
            {W_t^{(\ell)}}^T W_t^{(\ell)} = \left({W_t^{(\ell)}}^{T} \ldots {W_t^{(L)}}^T W_t^{(L)} \ldots W_t^{(\ell)} \right)^\frac{1}{L - \ell + 1}~.  
    \end{align}
\end{theorem}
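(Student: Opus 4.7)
The plan is to decouple the proof into a dynamical input and an algebraic identity. The dynamical input is the classical conservation law of Arora--Cohen--Hazan for gradient flow on deep linear networks: for each $\ell \in [L-1]$, the symmetric matrix ${W_t^{(\ell+1)}}^T W_t^{(\ell+1)} - W_t^{(\ell)} {W_t^{(\ell)}}^T$ is conserved along the flow. Under the balanced initialization hypothesis, this quantity vanishes at $t = 0$ and hence at all $t > 0$. So it suffices to prove Eq.~\eqref{eq: ansatz deep linear nets for matrix completion} for any fixed tuple $(W^{(1)},\ldots,W^{(L)})$ satisfying the static balancing condition $W^{(\ell)} {W^{(\ell)}}^T = {W^{(\ell+1)}}^T W^{(\ell+1)}$ for all $\ell \in [L-1]$, after which the time subscript can be suppressed.

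With balancing in hand, I would establish the following algebraic identity by induction on $k \ge 0$:
\[
{W^{(\ell)}}^T {W^{(\ell+1)}}^T \cdots {W^{(\ell+k)}}^T W^{(\ell+k)} \cdots W^{(\ell+1)} W^{(\ell)} \;=\; \left({W^{(\ell)}}^T W^{(\ell)}\right)^{k+1}.
\]
The base case $k=0$ is immediate. For the inductive step, apply the hypothesis at the shifted index $\ell+1$ to rewrite the interior block as $({W^{(\ell+1)}}^T W^{(\ell+1)})^{k+1}$, use the balancing condition once to replace it by $(W^{(\ell)} {W^{(\ell)}}^T)^{k+1}$, and then invoke the elementary identity $B^T (BB^T)^m B = (B^T B)^{m+1}$ with $B = W^{(\ell)}$ and $m = k+1$, which itself follows by expanding $(BB^T)^m = B(B^T B)^{m-1} B^T$ and telescoping.

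Setting $k = L - \ell$ in the identity yields ${W^{(\ell)}}^T \cdots {W^{(L)}}^T W^{(L)} \cdots W^{(\ell)} = ({W^{(\ell)}}^T W^{(\ell)})^{L - \ell + 1}$. Both sides are positive semidefinite, and the map $X \mapsto X^{L-\ell+1}$ is a bijection on the PSD cone whose inverse is the principal $(L-\ell+1)$-th root, so applying this root to both sides yields Eq.~\eqref{eq: ansatz deep linear nets for matrix completion}.

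The main obstacle is really the dynamical half: one must verify that gradient flow of the mean-square loss with predictor $\langle A, W^{(L)} \cdots W^{(1)} \rangle$ preserves the balancing invariant. This reduces to a short calculation using the backpropagation formula $\dot{W}^{(\ell)} = -{W^{(\ell+1)}}^T \cdots {W^{(L)}}^T \Delta\, {W^{(1)}}^T \cdots {W^{(\ell-1)}}^T$, where $\Delta := \tfrac{1}{n}\sum_i (\langle A_i, W^{(L)} \cdots W^{(1)} \rangle - y_i)\, A_i$, and checking that the time-derivatives of $W^{(\ell)} {W^{(\ell)}}^T$ and ${W^{(\ell+1)}}^T W^{(\ell+1)}$ agree term-by-term. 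The remaining algebra is clean and mechanical, and the matrix-root step is forced by the positive semidefiniteness of both sides.
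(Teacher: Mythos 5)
Your proposal is correct and takes essentially the same route as the paper: both rest on the conservation of the balancedness invariant under gradient flow (the paper cites and reproves this lemma in its appendix, exactly via the term-by-term comparison of $\frac{d}{dt}\bigl(W^{(\ell)}{W^{(\ell)}}^T\bigr)$ and $\frac{d}{dt}\bigl({W^{(\ell+1)}}^T W^{(\ell+1)}\bigr)$ that you sketch), followed by telescoping ${W^{(\ell)}}^T\cdots{W^{(L)}}^T W^{(L)}\cdots W^{(\ell)}$ down to $\bigl({W^{(\ell)}}^T W^{(\ell)}\bigr)^{L-\ell+1}$ via repeated use of balancing and the identity $B^T(BB^T)^m B=(B^TB)^{m+1}$. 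Your formal induction and explicit justification of the principal-root step on the PSD cone merely tidy up what the paper does as a chain of displayed equalities.
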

\begin{proof}    
The key to the proof is that  when training with gradient flow starting from balanced weights, the weights remain balanced for all time: ${W_t^{(\ell)}} {W_t^{(\ell)}}^T = {W_t^{(\ell+1)}}^T {W_t^{(\ell+1)}}$ for any $t > 0$~\cite[Appendix A.1]{Balancedness}.  For the reader's convenience, we restate and simplify the argument for this result in Appendix~\ref{sec: proof of balancedness}.   The proof of Theorem~\ref{thm: NFA matrix completion} follows immediately from balancedness of weights as the right hand side of Eq.~\eqref{eq: ansatz deep linear nets for matrix completion} simplifies to 
    \begin{align*}
        \left( {W_t^{(\ell)}}^{T} \ldots {W_t^{(L)}}^T W_t^{(L)} \ldots W_t^{(\ell)} \right)^\frac{1}{L - \ell + 1} &= \left( {W_t^{(\ell)}}^{T} \ldots {W_t^{(L-1)}}^T \left( W_t^{(L-1)} {W_t^{(L-1)}}^T \right) W_t^{(L-1)} \ldots W_t^{(\ell)} \right)^\frac{1}{L - \ell + 1} \\
        &= \left( {W_t^{(\ell)}}^{T} \ldots \left({W_t^{(L-1)}}^T W_t^{(L-1)} \right)^2 \ldots W_t^{(\ell)} \right)^\frac{1}{L - \ell + 1}\\
        &= \left( {W_t^{(\ell)}}^{T} \ldots {W_t^{(L-2)}}^T \left({W_t^{(L-2)}} {W_t^{(L-2)}}^T \right)^2 W_t^{(L-2)} \ldots W_t^{(\ell)} \right)^\frac{1}{L - \ell + 1} \\
        &= \left( {W_t^{(\ell)}}^{T} \ldots \left({W_t^{(L-2)}}^T {W_t^{(L-2)}}\right)^3  \ldots W_t^{(\ell)} \right)^\frac{1}{L - \ell + 1}\\
        &~ \vdots \\
        &= \left({W_t^{(\ell)}}^T W_t^{(\ell)} \right)^{\frac{L- \ell + 1}{L - \ell + 1}} \\
        &= {W_t^{(\ell)}}^T W_t^{(\ell)}~,
    \end{align*}
    which completes the argument. 
\end{proof}
\textbf{Remarks.}  The work~\cite{ApproximateBalancedness} proved that an approximate form of balancedness, known as $\delta$-balancedness, holds when training linear networks using discrete time gradient descent.  In particular, $\delta$-balancedness of weights is defined in~\cite{ApproximateBalancedness} as $\left\|{W^{(\ell+1)}}^T W^{(\ell+1)} - W^{(\ell)} {W^{(\ell)}}^T\right\|_F \leq \delta$.  We note that Theorem~\ref{thm: NFA matrix completion} also holds under $\delta$-balancedness upon including an $O(\delta)$ correction to the gradient outer product.  The above result simplifies to the case of deep linear diagonal networks used for linear regression by letting $f(x) = \langle x, W^{(L)} W^{(L-1)} \ldots W^{(1)} \rangle$ where the weights $\{W^{(i)}\}_{i=1}^{L}$ and data $x$ are diagonal matrices.  Note that in the case of deep linear diagonal networks, the balancedness condition on initialization implies that the magnitude of weights are equal at initialization.

The fact that the NFA holds in deep linear networks suggests that we should be able to streamline the training process for deep linear networks by utilizing AGOP instead of gradient descent.  To this end, we next introduce a formulation of lin-RFM for predictors of the form $f(A) = \langle A, W M^{(L)} \ldots M^{(1)}\rangle$, which we refer to as \textit{deep lin-RFM}. 

\begin{tcolorbox}
\begin{alg}[Depth $L$ lin-RFM] Let $f(A) = \langle A, W M^{(L)} \ldots M^{(1)}\rangle$, $\{\phi_{\ell}\}_{\ell=1}^{L}$ denote univariate functions operating on matrices through eigenvalues, and $\{(A_i, y_i)\}_{i=1}^{n} \subset \mathbb{R}^{d_1 \times d_2} \times \mathbb{R}$ denote training data.  Let $M_0^{(\ell)} = I$ for $\ell \in [L]$ and for $t \in \mathbb{Z}_+$ repeat the steps:
\label{alg: deep lin RFM matrix completion}
\begin{align*}
    & \textit{Step 1 (Linear Regression):} ~~ W_t = \argmin_{W \in \mathbb{R}^{d_1 \times d_2}}~ \|W\|^2_F  ~~ \text{subject to $\langle A_i, W M_t^{(L)} \ldots M_t^{(1)}\rangle = y_i$, $i \in [n]$}; \\
    & \textit{Step 2 (Layer-wise AGOP Update):} ~~ M_{t+1}^{(\ell)} = \phi_{\ell} \left( {M_t^{(\ell)}}^T \ldots {M_t^{(L)}}^T W_t^T W_t M_t^{(L)} \ldots M_t^{(\ell)} \right), \text{$\ell \in [L]$}.
\end{align*}
\end{alg}
\end{tcolorbox}
Analogously to the case of lin-RFM in Section~\ref{sec: fixed point equation of lin-RFM}, we can derive the fixed point equation of deep lin-RFM when $\phi_{\ell}(s) = (s + \epsilon)^{\alpha_{\ell}}$.
\begin{theorem}
\label{thm: deep lin-RFM fixed point analysis}
    Let $\phi_{\ell}(s) = (s + \epsilon)^{\alpha_{\ell}}$ for $\epsilon > 0$ and let $C_{\ell} = \sum_{i=1}^{\ell} \prod_{j=1}^{i-1} \alpha_i (1 - 2\alpha_j)$.  Assume $C_{\ell} > 0$ for all $\ell \in [L]$ and that $\sum_{\ell=1}^{L} C_{\ell} < \frac{1}{2}$.  Then, the fixed points of Algorithm~\ref{alg: deep lin RFM matrix completion} are the first-order critical points of the following optimization problem: 
    \begin{align*}
        \argmin_{Z \in \mathbb{R}^{d_1 \times d_2}} \sum_{j=1}^{d_1} \psi_{\epsilon}(\sigma_j(Z)) ~\text{subject to $\mathcal{A}(Z) = y$}~;
    \end{align*}    
    where $\psi_{\epsilon}(r) := \int_{0}^{r} s h_1(s) \ldots h_L(s)\, ds$ and $h_i(s)$ are defined recursively as 
    \begin{align}
        h_1(s) = (s^2 + \epsilon)^{\alpha_1};\qquad  h_{\ell}(s) = \left( s^2 h_1(s)^{-2} \ldots h_{\ell-1}(s)^{-2} + \epsilon \right)^{\alpha_{\ell}} ~ \text{for $\ell \in \{2, \ldots, L\}$}.
    \end{align}
  Moreover, $\lim\limits_{\epsilon \to 0} \psi_{\epsilon}(r) = r^{2 - 4C_1 - \ldots -4C_L}$ for every $r$.  
\end{theorem}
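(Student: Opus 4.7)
The plan is to follow the template of the shallow case in Theorem~\ref{theorem: Fixed point equation of lin-RFM}: introduce a change of variables that decouples Step 1 from the per-layer AGOP updates, show that at a fixed point the layer matrices are simultaneously diagonalizable with $Z^\top Z$, and then read off the regularized optimality condition via a spectral derivative formula. Concretely, since each $\phi_\ell$ is strictly positive and $\epsilon>0$, every $M_t^{(\ell)}$ is invertible, so with $Z_t := W_t M_t^{(L)}\cdots M_t^{(1)}$ and $P_t := M_t^{(L)}\cdots M_t^{(1)}$, Step 1 is equivalent to
\begin{equation*}
Z_{t+1} = \argmin_{Z \in \mathbb{R}^{d_1\times d_2}} \|Z\,P_t^{-1}\|_F^2 \quad \text{subject to}\quad \langle A_i, Z\rangle = y_i\ \text{ for all } i\in[n],
\end{equation*}
whose first-order optimality condition reads $2Z_* P_*^{-1}P_*^{-\top} = \mathcal{A}^*(\lambda)$ for some multiplier $\lambda$.

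Next, I would analyze the fixed-point form of Step 2. Substituting $W_* = Z_* P_*^{-1}$ directly into the AGOP argument and telescoping, the matrix passed to $\phi_\ell$ simplifies to $(M_*^{(\ell-1)}\cdots M_*^{(1)})^{-\top}\,Z_*^\top Z_*\,(M_*^{(\ell-1)}\cdots M_*^{(1)})^{-1}$. The key structural claim, proved by induction on $\ell$, is that every $M_*^{(\ell)}$ is symmetric positive-definite and commutes with $Z_*^\top Z_*$. The base case $\ell=1$ is immediate since $M_*^{(1)} = \phi_1(Z_*^\top Z_*)$. For the inductive step, if $M_*^{(1)},\ldots,M_*^{(\ell-1)}$ are symmetric and commute with $Z_*^\top Z_*$, then their product $Q := M_*^{(\ell-1)}\cdots M_*^{(1)}$ is symmetric and $Q^{-\top}Z_*^\top Z_* Q^{-1} = Q^{-2}Z_*^\top Z_*$, which again is symmetric and commutes with $Z_*^\top Z_*$; applying $\phi_\ell$ preserves this. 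Restricting everything to the common eigenbasis $V$ of right singular vectors of $Z_*$ and reading off eigenvalues layer by layer yields exactly the scalar recursion $\mu_j^{(\ell)} = h_\ell(\sigma_j(Z_*))$ from the theorem statement.

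With this spectral picture, Step 1's optimality condition diagonalizes in the joint SVD basis of $Z_*$ and reduces, coordinate by coordinate, to a scalar identity whose right-hand side is $\sigma_j$ times an explicit rational expression in the values $h_\ell(\sigma_j)$. By the spectral derivative theorem of~\cite{LewisSpectralDerivatives} applied to the orthogonally invariant function $F(Z) := \sum_j \psi_\epsilon(\sigma_j(Z))$, this scalar identity is precisely the Lagrangian optimality condition of $\min_Z \sum_j \psi_\epsilon(\sigma_j(Z))$ under the measurement constraints, with $\psi_\epsilon$ given by the stated integral formula. Combined with primal feasibility $\mathcal{A}(Z_*)=y$, this exhibits every fixed point $Z_*$ as a first-order critical point of the regularized problem.

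For the $\epsilon\to 0$ asymptotic, I would substitute $\epsilon=0$ into the recursion and verify by induction that $h_\ell(s)$ collapses to a pure monomial in $s$ with exponent $2\alpha_\ell\prod_{j=1}^{\ell-1}(1-2\alpha_j)$. The integrand of $\psi_\epsilon$ then becomes a single power of $s$, integrating to a monomial whose exponent is precisely $2-4C_1-\cdots-4C_L$. The assumptions $C_\ell>0$ and $\sum_\ell C_\ell < 1/2$ ensure that the limiting integrand is integrable at $0$ and dominated on any compact interval in $(0,\infty)$, so passing the limit inside the integral is routine by dominated convergence. The main obstacle I expect is the inductive argument for simultaneous diagonalizability of the $M_*^{(\ell)}$: the $\phi_\ell$-argument is a priori a product of noncommuting matrices, and only the inductive hypothesis on lower layers forces the requisite commutativity; once that structural step is secured, the remaining work is algebraic bookkeeping of exponents through the recursion for $h_\ell$.
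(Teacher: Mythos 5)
Your proposal is correct and follows essentially the same route as the paper's proof: the substitution $Z_t = W_t M_t^{(L)}\cdots M_t^{(1)}$, the inductive observation that at a fixed point the $M^{(\ell)}$ are symmetric functions of $Z^\top Z$ and hence commute (yielding the scalar recursion for $h_\ell$), the appeal to the spectral derivative theorem of \cite{LewisSpectralDerivatives} to recast the stationarity condition as that of the $\psi_\epsilon$-regularized problem, and a limit-interchange argument for $\epsilon \to 0$. The only cosmetic difference is that you invoke dominated convergence where the paper uses the Monotone Convergence Theorem (the $\kappa_{\epsilon_n}$ increase to the limiting monomial as $\epsilon_n \downarrow 0$); both are valid here.
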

Here, the condition on the terms $C_{1}, \ldots, C_{L}$ is such that we may establish convergence of $\psi_{\epsilon}$ as $\epsilon \to 0$.  The following corollary to Theorem~\ref{thm: deep lin-RFM fixed point analysis} shows that using powers $\alpha_{\ell}$ suggested by Theorem~\ref{thm: NFA matrix completion} captures the implicit bias of depth $L+1$ linear networks that fit data while minimizing $\ell_2$ norm of network weights.   
\begin{corollary}    
    \label{corollary: deep lin rfm and deep lin networks}
    Let $\phi_{\ell}(s) = (s + \epsilon)^{\alpha_{\ell}}$ for $\epsilon > 0$ where $\alpha_{\ell} = \frac{1}{2(L - \ell + 2)}$.  Then, the fixed points of Algorithm~\ref{alg: deep lin RFM matrix completion} are the first-order critical points of the following optimization problem: 
    \begin{align*}
        \argmin_{Z \in \mathbb{R}^{d_1 \times d_2}} \sum_{j=1}^{d_1} \psi_{\epsilon}(\sigma_j(Z)) ~\text{subject to $\mathcal{A}(Z) = y$}~;
    \end{align*}    
    where $\psi_{\epsilon}$ is defined in Theorem~\ref{thm: deep lin-RFM fixed point analysis} and satisfies $\lim\limits_{\epsilon \to 0} \psi_{\epsilon}(r) = r^{\frac{2}{L+1}}$ for every $r$.    
\end{corollary}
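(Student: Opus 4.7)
The strategy is to invoke Theorem~\ref{thm: deep lin-RFM fixed point analysis} with the prescribed exponents $\alpha_\ell = \tfrac{1}{2(L-\ell+2)}$. Two checks are required: (i) the hypotheses that $C_\ell > 0$ for every $\ell$ and $\sum_{\ell=1}^{L} C_\ell < \tfrac{1}{2}$; and (ii) that the resulting limit exponent $2 - 4\sum_{\ell=1}^L C_\ell$ simplifies exactly to $\tfrac{2}{L+1}$. Both reduce to a single telescoping identity, which is where the particular form of $\alpha_\ell$ does all the work.

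First I would substitute to obtain $1 - 2\alpha_j = \tfrac{L-j+1}{L-j+2}\in(0,1)$, using that each $\alpha_j \leq 1/4$, so every factor appearing in $C_\ell$ is strictly positive. Then I would evaluate the partial product $\prod_{j=1}^{\ell-1}(1-2\alpha_j) = \prod_{j=1}^{\ell-1}\tfrac{L-j+1}{L-j+2}$; the numerators and denominators cancel in succession, leaving $\tfrac{L-\ell+2}{L+1}$. Multiplying by $\alpha_\ell=\tfrac{1}{2(L-\ell+2)}$ cancels the residual $L-\ell+2$ in the denominator, yielding the striking identity $C_\ell = \tfrac{1}{2(L+1)}$, which is strictly positive and, crucially, \emph{independent} of $\ell$.

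With $C_\ell$ constant in $\ell$, the remainder is immediate. Summing gives $\sum_{\ell=1}^L C_\ell = \tfrac{L}{2(L+1)} < \tfrac{1}{2}$, verifying both hypotheses of Theorem~\ref{thm: deep lin-RFM fixed point analysis}. The limit exponent then reduces to $2 - 4\cdot\tfrac{L}{2(L+1)} = \tfrac{2(L+1)-2L}{L+1} = \tfrac{2}{L+1}$, so Theorem~\ref{thm: deep lin-RFM fixed point analysis} delivers $\lim_{\epsilon\to 0}\psi_\epsilon(r)=r^{2/(L+1)}$, which is precisely the implicit bias $\ell_{2/(L+1)}$ of depth-$(L+1)$ deep linear networks~\cite{shang2020unified, gunasekar2018implicitconv}.

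There is no serious obstacle: the heavy lifting is already carried out in Theorem~\ref{thm: deep lin-RFM fixed point analysis}, and the only step with any content is recognizing the telescoping structure. The choice $\alpha_\ell = \tfrac{1}{2(L-\ell+2)}$, dictated by the NFA of Theorem~\ref{thm: NFA matrix completion} for a depth-$(L+1)$ network, is engineered precisely so that the $\alpha_\ell$ factor cancels the residual denominator left by $\prod_{j<\ell}(1-2\alpha_j)$; this constancy of $C_\ell$ is the one crisp observation that makes the corollary essentially a single-line calculation.
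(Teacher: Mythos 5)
Your proposal is correct and follows exactly the paper's route: the paper's proof consists precisely of verifying that $\alpha_{\ell} = \frac{1}{2(L-\ell+2)}$ forces $C_{\ell} = \frac{1}{2(L+1)}$ for all $\ell$, and you supply the telescoping computation $\prod_{j=1}^{\ell-1}(1-2\alpha_j) = \frac{L-\ell+2}{L+1}$ that makes this explicit, then check the hypotheses of Theorem~\ref{thm: deep lin-RFM fixed point analysis} and simplify the limit exponent to $\frac{2}{L+1}$. No gaps; this matches the paper's argument in both structure and substance.
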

The proof of Corollary~\ref{corollary: deep lin rfm and deep lin networks} follows from verifying that $\alpha_{\ell} = \frac{1}{2(L - \ell +2)}$ implies $C_{\ell} = \frac{1}{2(L+1)}$ for $\ell \in [L]$.  This result provably shows that the implicit bias of deep linear networks can be recovered without backpropagation, by solely training the last layer of deep networks and utilizing AGOP for updating intermediate layers. Note that deep lin-RFM with arbitrary functions $\phi_{\ell}$ is more general than shallow lin-RFM analyzed in Algorithm~\ref{alg: RFM matrix completion}.  Nevertheless, for matrix powers, for $\phi_{\ell}(s) = (s + \epsilon)^{\alpha_{\ell}}$, the fixed point equation of deep lin-RFM can be expressed in terms of shallow lin-RFM for a specific choice of $\alpha$, namely, $\alpha = \sum_{\ell=1}^{L}C_{\ell}$, as is evident from the form of $\psi_{\epsilon}(r)$ as $\epsilon \to 0$ in Theorem~\ref{thm: deep lin-RFM fixed point analysis}.

\section{SVD-free lin-RFM and numerical results}
\label{sec: SVD free lin-RFM and experiments}

\begin{figure}[!t]
    \centering
    \includegraphics[width=\textwidth]{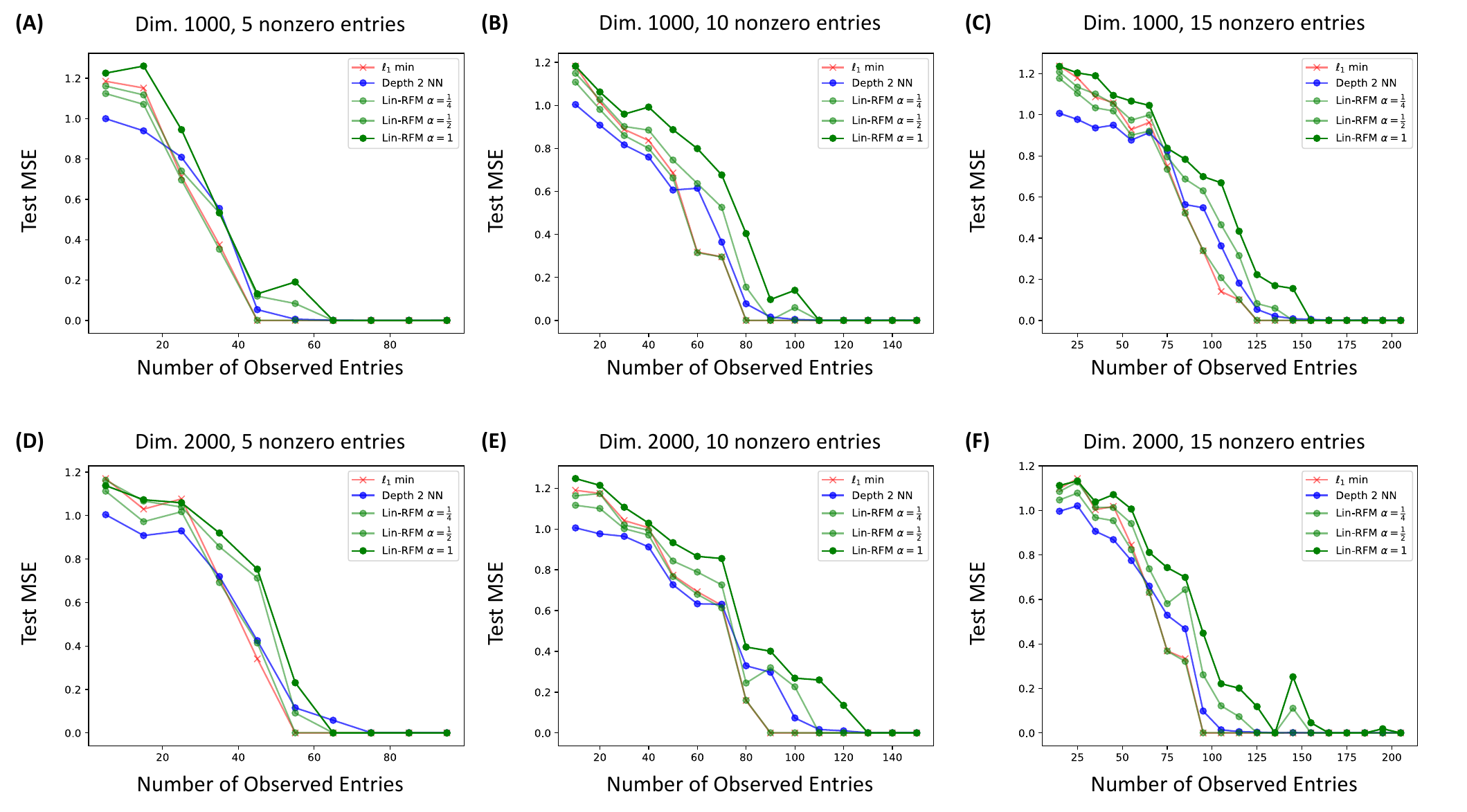}
    \caption{Performance of lin-RFM with various matrix powers $\alpha$, deep linear networks, and minimizing $\ell_1$ norm in sparse linear regression.  All models are trained and tested on the same data, and all results are averaged over 5 random draws of data.}
    \label{fig: NoiseLess linear reg}
\end{figure}

We now introduce a computationally efficient, SVD-free formulation of Algorithm~\ref{alg: RFM matrix completion} when selecting $\phi$ to be matrix powers $\phi(s)=s^{\alpha}$ where  $\alpha$ is any integer multiply of $\frac{1}{2}$.  Notably, our result for $\alpha = \frac{1}{2}$ shows that IRLS-$0$ can be implemented without computing any SVDs.  We then utilize our implementation to empirically analyze how various hyper-parameters of lin-RFM affect sparse recovery.  We empirically show that for noiseless linear regression, $\ell_1$-norm minimization corresponding to lin-RFM with $\alpha = \frac{1}{4}$ yields best results.  We then compare the performance of lin-RFM with $\alpha \in \{\frac{1}{2}, 1\}$, deep linear networks, and nuclear norm minimization for low rank matrix completion.  Our results demonstrate that lin-RFM for $\alpha \geq \frac{1}{2}$ substantially outperforms deep linear networks for this task.

\paragraph{SVD-free form of lin-RFM.}  It is evident from Algorithm~\ref{alg: RFM matrix completion} that lin-RFM with $\phi$ as non-negative integer matrix powers $\alpha$ can be implemented through matrix multiplications.  Thus, the main computational difficulty arises when utilizing non-integral powers $\alpha$, because naively each iteration would require forming a full SVD. We show now how to avoid costly SVD computations entirely.  For simplicity, we focus on lin-RFM for $\alpha = \frac{1}{2}$; similar argument holds for any $\alpha$ that is an integer multiple of $\frac{1}{2}$. We note that in our experiments on matrix completion lin-RFM with $\alpha = \frac{1}{2}$ performs the best in terms of both the recovery error and computational speed.

We begin by observing that the optimality conditions for Step 1 of Algorithm~\ref{alg: RFM matrix completion} read as follows: there exists a vector $\gamma=(\gamma^{(1)},\ldots, \gamma^{(n)})$ satisfying 
\begin{align*}
    W_t=\sum_{j=1}^n \gamma^{(j)} A_jM_t\qquad \textrm{and}\qquad
    y_i=\langle A_iM_t,W_t\rangle\qquad \forall i\in [n].
\end{align*}
Plugging in the first equation into the second, we see that instead of searching for $W_t$ we may search for $\gamma$ satisfying the linear equation
\begin{equation}\label{eqn:reform}
y_i=\sum_{j=1}^n \gamma^{(j)}{\rm tr}(A_j^{\top}M_t^2)\qquad \forall i\in [n].
\end{equation}
Assuming that we have $M_t^2$ available, computing $\gamma$ is just an equation solve. Now Step 2 of Algorithm~\ref{alg: RFM matrix completion} yields the following update equation  
\begin{align*}
M_{t+1}^2=M_t W_t^\top W_t M_t=M_t^2 \left(\sum_{j=1}^n\gamma^{(j)} A_j \right)^\top\left(\sum_{j=1}^n\gamma^{(j)} A_j\right) M_t^2
\end{align*}
Therefore we may compute $M_{t+1}^2$ from $\gamma$ and $M_t^2$ through matrix multiplications, thereby avoiding SVD computations throughout the algorithm.

In our experiments on the matrix completion problem, we add ridge regularization during the equation solve in Eq.~\eqref{eqn:reform}, utilizing a ridge parameter $\lambda > 0$ that is the same for all rows.  We also explicitly enforce that $(W_t M_t)_{\Omega} = Y_{\Omega}$ in each iteration.

\begin{figure}[!t]
    \centering
    \includegraphics[width=\textwidth]{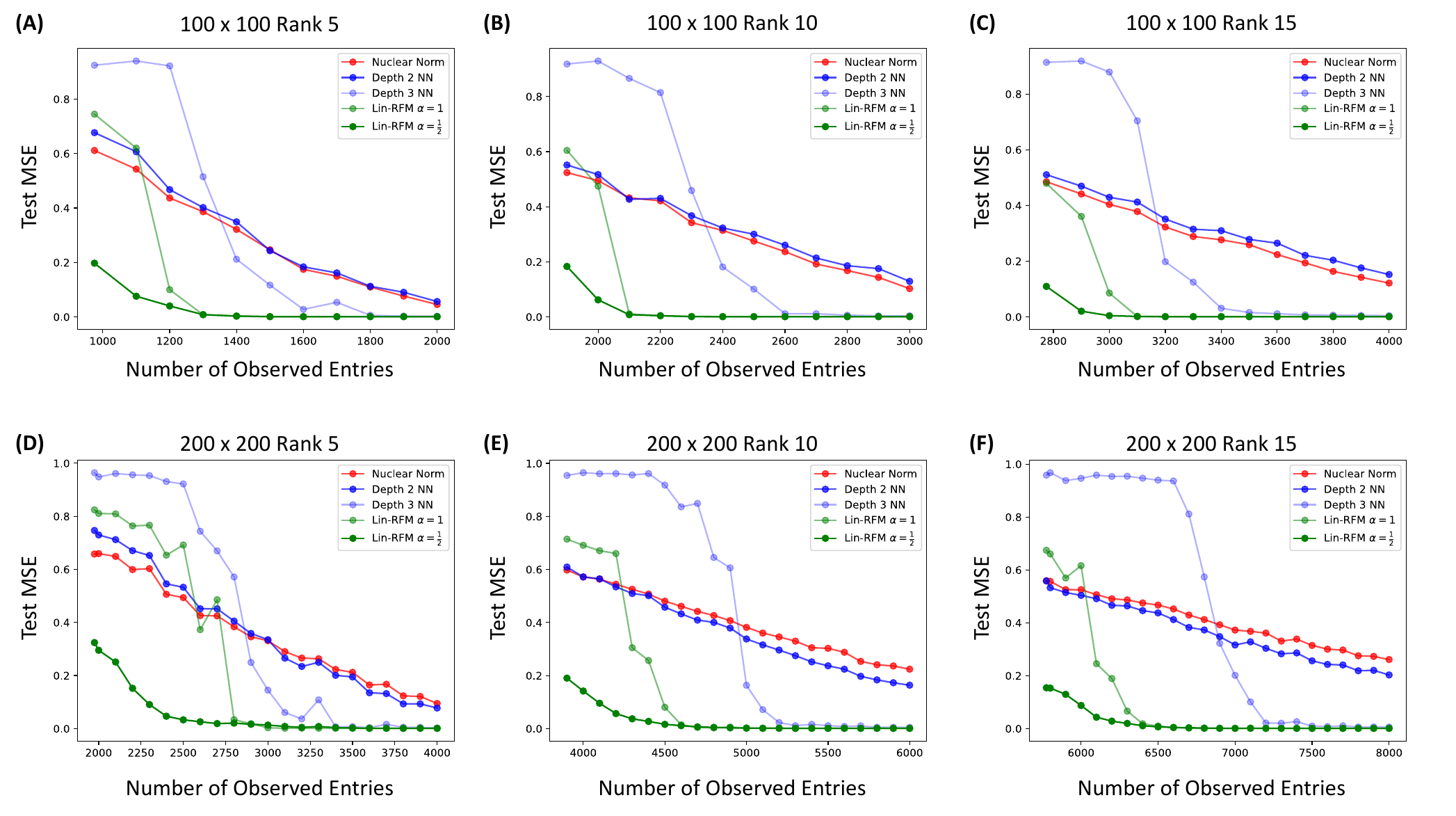}
    \caption{Performance of lin-RFM, deep linear networks, and minimizing nuclear norm in low rank matrix completion.  All models are trained and tested on the same data, and all results are averaged over 5 random draws of data.}
    \label{fig: NoiseLess matrix completion}
\end{figure}

\paragraph{Comparison of models for noiseless linear regression.}  We now compare performance of lin-RFM for $\alpha \in \{ \frac{1}{4}, \frac{1}{2}, 1 \}$, depth $2$ linear diagonal networks (predictors of the form $f(x) = x^T (w_2 \odot w_1)$ where $\odot$ denotes the Hadamard product) trained using gradient descent, and minimizing $\ell_1$-norm in the context of noiseless, sparse linear regression.  In particular, we sample $n$ training samples $X \in \mathbb{R}^{n \times d}$ according to $X_{i, :} \sim \mathcal{N}(0, I_d)$, generate a ground truth sparse weight vector $w^* \in \mathbb{R}^d$ with $w_i^* \sim U[.5, 1]$ for $i \in [r]$ and $w_i^* = 0$ for $i > r$. We choose our sampling scheme for $w^*$ to separate non-zero entries of $w^*$ from $0$.  Our training labels are generated according to $y = X w^*$.  In Fig.~\ref{fig: NoiseLess linear reg}, we compare the test mean squared error (MSE) of these models on a held out test set of size $10000$ for $d \in \{1000, 2000\}$ and $r \in \{5, 10, 15\}$, as a function of the number of training samples $n$.  For lin-RFM, we utilize a regularization parameter of $10^{-10}$ to avoid numerical instabilities in linear regression solves.  Training details for all models are outlined in Appendix~\ref{appendix: Training and hyper-parameter details}.  We compare against two layer diagonal networks for computational reasons: three layer linear diagonal networks took over $10^{6}$ epochs to converge with gradient descent from near-zero initialization, and without such near-zero initialization, performance was far worse (closer to that of $\ell_2$-norm minimization) for three-layer diagonal networks than two layer diagonal networks.  These results are consistent with those from~\cite{LinearDiagonalNetsStepSize}.  As predicted by our results, we find that $\ell_1$-norm minimization and lin-RFM with $\alpha = \frac{1}{4}$ perform similarly (the curves are overlapping).  Interestingly, we observe that models that minimize $\ell_1$-norm achieve best recovery rates for this problem, as defined by average test MSE under $10^{-3}$ across $5$ draws of data.

\begin{figure}[!t]
    \centering
    \includegraphics[width=\textwidth]{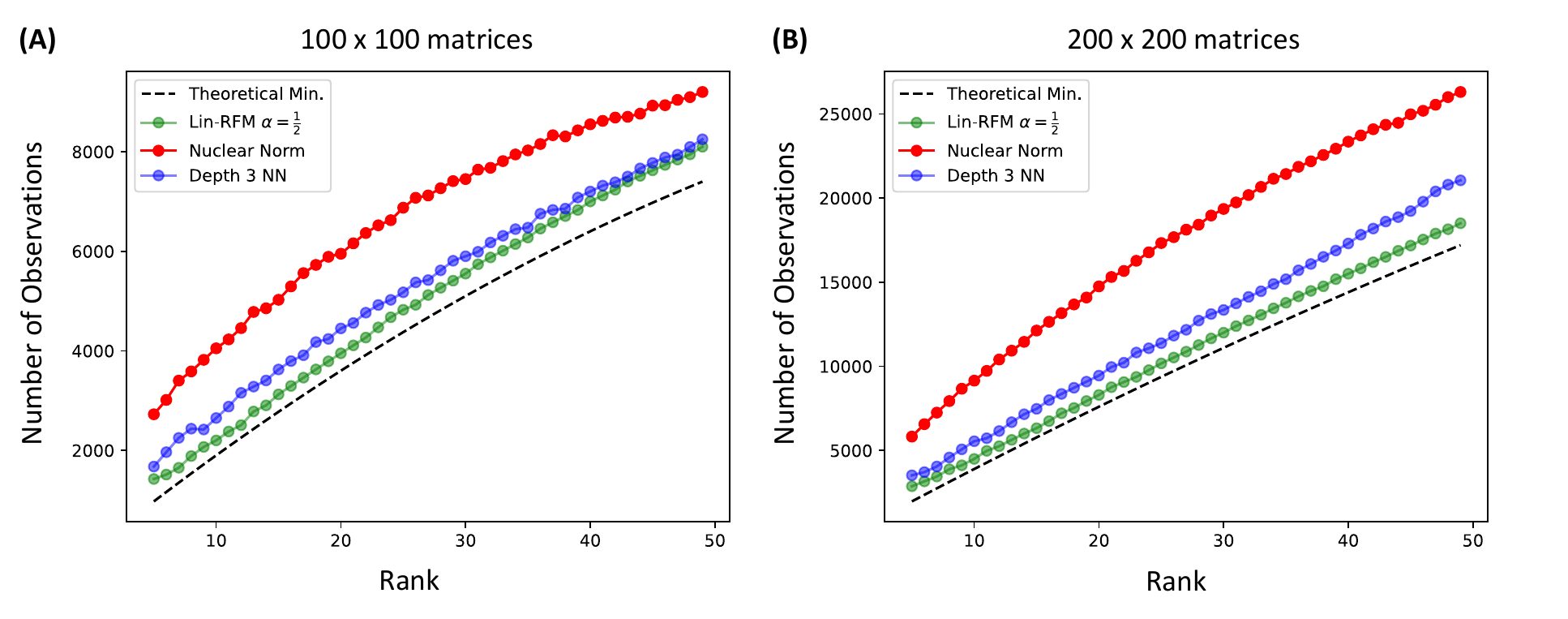}
    \caption{Performance of lin-RFM , deep linear networks, and minimizing nuclear norm in matrix completion as a function of the rank of the ground truth matrix.  The dashed black line represents the number of degrees of freedom, $2dr - r^2$, of a rank $r$ matrix of size $d \times d$.  Note that all curves must converge as $r \to d$ since there is only one rank $d$ solution requiring $d^2$ observations.  Each point represents the number of observations at which the model was able to achieve under $10^{-3}$ test MSE across 5 random draws of data.  Overall, we observe that lin-RFM with $\alpha = \frac{1}{2}$ requires the fewest samples to achieve consistently low test MSE.  Lin-RFM requires up to thousands fewer examples than deep linear networks and directly minimizing nuclear norm.}
    \label{fig: Matrix completion scaling}
\end{figure}

\paragraph{Comparison of models for low rank matrix completion.} We next compare performance of lin-RFM for $\alpha \in \{\frac{1}{2}, 1\}$, depth $2$ and $3$ linear networks trained using gradient descent, and minimizing nuclear norm in the context of low rank matrix completion.  We omit lin-RFM with $\alpha = \frac{1}{4}$ since performance is similar to that of nuclear norm minimization, and we do not have an SVD-free formulation of lin-RFM for this choice of $\alpha$.  In particular, we sample a ground truth $d \times d$ matrix, $Y$, of rank $r$, observe a random subset of $n$ entries, and hold out the remaining $d^2 - n$ entries as test data.  We generate data according to the same procedure in~\cite{DeepMatrixFactorization}.  Namely, we first generate two matrices, $U \in \mathbb{R}^{d \times r}$ and $V \in \mathbb{R}^{d \times r}$, with $U_{ij}, V_{ij} \overset{i.i.d.}{\sim} \mathcal{N}(0, 1)$ for $i \in [d], j \in [r]$, and then, set $Y = \frac{d}{\|UV^T\|_F} U V^T$.  In Fig.~\ref{fig: NoiseLess matrix completion}, we compare the test MSE of these models for $d \in \{100, 200\}$ and $r \in \{5, 10, 15\}$ as a function of the number of observed entries $n$.  In all experiments, we start with $n = 2dr - r^2$, which corresponds to the number of degrees of freedom of a rank $r$ matrix~\cite{MatrixCompletionNuclearNormTao}.  Training and hyper-parameter details for all models are presented in Appendix~\ref{appendix: Training and hyper-parameter details}.  We utilize depth $2, 3$ linear networks in our comparisons in Fig.~\ref{fig: NoiseLess matrix completion} since prior work~\cite{DeepMatrixFactorization} demonstrated that higher depth networks yielded comparable performance to depth $3$ networks.  In contrast to the case of sparse linear regression, we observe that lin-RFM with $\alpha = \frac{1}{2}$ (corresponding to the log-determinant objective) far outperforms other models on this task.  In Appendix Fig.~\ref{fig: Regularization}, we analyze the effect of tuning ridge-regularization parameter in lin-RFM.  We generally observe that as rank increases, lowering regularization leads to better performance.

Next, we compare performance of these models as a function of rank of the ground truth matrix.  In particular, in Fig.~\ref{fig: Matrix completion scaling}, we plot the number of examples needed by lin-RFM, nuclear norm minimization, and deep linear networks to achieve under $10^{-3}$ test error across five random draws of observations.  As a baseline for comparison, we plot (dashed black line) the number of degrees of freedom of a rank $r$ matrix of size $d \times d$, which is $2dr - r^2$.  We observe that lin-RFM requires up to thousands fewer entries than depth $3$ linear networks and directly minimizing nuclear norm for consistently achieve low test error.  In Appendix Fig.~\ref{fig: Label Noise}, we present results for the case of noisy matrix completion in which the training observations are corrupted with additive label noise.  We again find that lin-RFM outperforms all other methods in this setting.

\paragraph{Runtime comparison.}  While Algorithm~\ref{alg: RFM matrix completion} is written concisely, finding the minimum norm solution naively as stated in \eqref{eqn:reform} is inefficient.  Namely, this step involves solving a system of equations with $n = |\Omega|$ constraints and $d_1 d_2$ variables for $d_1 \times d_2$ matrices. Thus, as $n \leq d_1 d_2$, the runtime of naively solving this problem is $O(n^2 d_1 d_2)$.  Instead, we can take advantage of the sparsity in this problem to reduce the runtime to $O(d_2 \sum_{i=1}^{d_1} n_i^2)$ where $n_i$ is the number of observations in row $i$.  In particular, we observe that we can solve for the rows of $W_t$ independently with row $i$ involving a problem with $d_2$ variables and $n_i$ constraints.  Thus, the runtime per row is $O(n_i^2 d_2)$ and for all rows, the runtime is $O(d_2 \sum_{i=1}^{d_1} n_i^2)$.  If $d_1, d_2$ are equal to $d$, and the entries are observed at random concentrating around $\frac{n}{d}$ entries per row, then this runtime is $O(n^2)$, which reduces the runtime by a factor of $d_1 d_2$.  Moreover, the amount of memory needed by solving the rows independently is only $O(d_1 d_2)$, which is in contrast to the $O(n d_1 d_2)$ memory needed by solving the problem naively.  As Step 2 involves matrix multiplication, which is naively $O(d_2^3)$ when multiplying $d_2 \times d_2$ matrices, the total runtime is $O(d_2 \sum_{i=1}^{d_1} n_i^2) + O(d_2^3)$.  As a concrete example of runtimes, for a $5000 \times 5000$ rank $10$ matrix with $1$ million observations, our method takes $559$ seconds, two-layer width $5000$ networks take $5095$ seconds, and three-layer width $5000$ networks take $1779$ seconds to achieve less than $10^{-3}$ test error.  All runtimes are in wall time and all models are trained using a single $12$GB Titan Xp GPU. 

\section{Summary and Broader perspectives}
\label{sec: conclusion}

\paragraph{Summary.}{A key question of modern deep learning is understanding how neural networks automatically perform {\it feature learning} -- task-specific dimensionality reduction through training.  The recent work~\cite{radhakrishnan2022feature} posited that feature learning arises through a statistical object known as average gradient outer product (AGOP).  Based on that insight, the authors proposed Recursive Feature Machines (RFMs), as an iterative algorithm that explicitly performs feature learning by alternating between reweighting feature vectors with AGOP and training predictors on the transformed features. 
 In this paper, we show that RFM for the special case of linear models recovers classical Iteratively Reweighted Least Squares algorithms for sparse linear regression and low-rank matrix recovery. We believe our results are significant for the following reasons: 
\begin{enumerate}
\item The fact that classical sparse learning algorithms naturally arise from a seemingly unrelated feature learning mechanism in deep learning is unlikely to be a fortuitous coincidence. This correspondence provides substantial evidence that the mechanism proposed in~\cite{radhakrishnan2022feature} is indeed a major aspect of feature learning in deep neural networks. 
\item This connection opens a new line of investigation relating still poorly understood aspects of deep learning to classical techniques of statistical inference. Through this connection, RFMs and, by extension, fully connected neural networks, can be viewed as nonlinear extensions of IRLS.  We envision that this connection could be fruitful in understanding the effectiveness of nonlinear deep networks used today and building the next generation of theoretically well-grounded models.  
\item The SVD-free implementation of Lin-RFM for $\alpha=1/2$ provides an efficient approach to solve low-rank matrix recovery problems, and is new to the best of our knowledge. Notably, lin-RFM subsumes IRLS-$p$ as well as implicit bias of deep linear networks identified in~\cite{gunasekar2018implicitconv, shang2020unified}.  
\end{enumerate}}

\paragraph{Broader perspectives on beating the curse of dimensionality.}  Many natural or artificial datasets are high-dimensional. A gray-scale image is a point in a space whose  dimension is equal to the number of pixels, easily in the millions. Representations of text used in modern LLMs and even in older techniques, such as bag-of-words, have anywhere from thousands to billions of dimensions. Dealing with high-dimensional data has long been one of the most significant and difficult problems in statistics and data analysis. Traditional linear methods of dimensionality reduction such as principal components analysis (PCA) are among the most widely used algorithms in data science and, indeed, in all of science. 
They, and their variations, have been useful in machine learning problems such as vision (Eigenfaces~\cite{turk1991eigenfaces}) or Natural Language Processing (NLP) (Latent Semantic Indexing~\cite{deerwester1990indexing}). 
Still, faced with modern data and applications, these methods are generally insufficient as they suffer from two limitations: (a) they are linear with respect to a fixed, pre-determined basis; (b) they are not task-adaptive. 

Historically, two  major lines of research to address these limitations have been non-linear-dimensionality reduction/manifold learning~\cite{wiki:nonlineardimensionalityreduction} and sparse inference~\cite{wiki:compressedsensing}, which is discussed at length in this work. 
It is interesting to note that while manifold learning deals effectively with the non-linearity of the data, it is still primarily an unsupervised technique suffering from the limitation (b). In contrast, the work on sparsity attempts to recover structure from the labeled data and thus addresses the limitation (b). However, much of the sparse inference work (aside from dictionary learning) concentrates on sparsity with respect to a certain basis (e.g., a wavelet basis) selected {\it a priori}, without reference to the data.  It is thus not able to address (a). It appears that modern deep learning is capable of successfully overcoming both (a) and (b).  We hope that the mechanisms exposed in this paper will shed light on the remarkable success of deep learning and point toward designing new better methods for high-dimensional inference.

\section*{Lin-RFM Code}
Code for lin-RFM is available at \url{https://github.com/aradha/lin-RFM}.  

\section*{Acknowledgements} 
A.R. is supported by the George F. Carrier Postdoctoral Fellowship in the School of Engineering and Applied Sciences at Harvard University. 
M.B. acknowledges support from National Science Foundation (NSF) and the Simons Foundation for the Collaboration on the Theoretical Foundations of Deep Learning (\url{https://deepfoundations.ai/}) through awards DMS-2031883 and \#814639  and the TILOS institute (NSF CCF-2112665). 
 This work used the programs (1) XSEDE (Extreme science and engineering discovery environment)  which is supported by NSF grant numbers ACI-1548562, and (2) ACCESS (Advanced cyberinfrastructure coordination ecosystem: services \& support) which is supported by NSF grants numbers \#2138259, \#2138286, \#2138307, \#2137603, and \#2138296. Specifically, we used the resources from SDSC Expanse GPU compute nodes, and NCSA Delta system, via allocations TG-CIS220009. Research of D.D. was supported by NSF CCF-2023166 and DMS-2306322 awards.

\bibliographystyle{abbrv}
\bibliography{references}

\appendix
\newpage

\section{Lin-RFM as re-parameterization of IRLS}
\label{appendix: Reformulation of lin-RFM as IRLS}

We now show that when $\phi$ is a matrix power $\alpha$, lin-RFM is an inverse-free re-parameterization of IRLS-$p$ with $p = 2 - 4\alpha$.  We recall the IRLS-$p$ algorithm from~\cite{MatrixCompletionIterativeLeastSquaresMohan} below.

\begin{tcolorbox}
\begin{alg}[IRLS-$p$~\cite{MatrixCompletionIterativeLeastSquaresMohan}] Let $f(A) =  \langle A, X \rangle$ and let $\{(A_i, y_i)\}_{i=1}^{n} \subset \mathbb{R}^{d_1 \times d_2} \times \mathbb{R}$ denote training data.  Let $P_0 = I$, select $\epsilon_0 > 0$, and for $t \in \mathbb{Z}_+$,
\label{alg: IRLS-p}
\begin{align*}
    & \textit{Step 1:} \qquad X_t = \argmin_{X \in \mathbb{R}^{d_1 \times d_2}}~ \Tr(P_t X^T X) ~~ \text{subject to $ \langle A_i, X \rangle = y_i$ for all $i \in [n]$}~; \\
    & \textit{Step 2:} \qquad P_{t+1} = \left( X_t^T X_t + \epsilon_t I \right)^{\frac{p}{2} - 1}~ \\
    & \textit{Step 3:} \qquad \text{Select $\epsilon_{t+1} \in (0, \epsilon_t]$.}
\end{align*} 
\end{alg}
\end{tcolorbox}

It is apparent that IRLS-$p$ is a re-parameterization of lin-RFM upon making the following substitution using the notation from Algorithm~\ref{alg: RFM matrix completion}: $X_t = W_t M_t$ and $P_t = M_t^{-2}$.  Note that for $p \in [0, 1]$, our re-parameterization avoids the inverse used in Step 2 of Algorithm~\ref{alg: IRLS-p}.

\section{Lin-RFM for sparse linear regression}
\label{appendix: lin-RFM linear reg}

Below, we state the lin-RFM algorithm in the context of sparse linear regression.  We let $X = [x_1 \ldots x_n]^T \in \mathbb{R}^{n \times d}$ denote training samples (also known as the design matrix) and $y = [y_1 \ldots y_n]^T$ denote the labels.

\begin{tcolorbox}
\begin{alg}[Lin-RFM for sparse linear regression] Let $f(x) = x^T (w \odot m)$ where $w, m \in \mathbb{R}^{d}$ for $i \in [L]$ where $\odot$ denotes elementwise multiplication (Hadamard product).  Let $m_0 = \mathbf{1}$ and for $t \in \mathbb{Z}_+$,
\begin{align*}
    &\textit{Step 1:} \qquad w_t = \argmin_{w \in \mathbb{R}^{d}} \|w\|_2^2 ~\text{subject to $X^T (w \odot m_t) = y$};\\
    &\textit{Step 2:} \qquad \text{Let $m_{t+1} = \phi_i\left( w_t^2 \odot m_t^2\right)$};
\end{align*}
where $\phi_i$ and all powers are applied element-wise.  
\end{alg} 
\end{tcolorbox}
While $f$ takes the form of a linear diagonal neural network considered in~\cite{LinearDiagonalNetKernelRichLazy, LinearDiagonalNetsStepSize}, we use the AGOP instead of using gradient descent to update $w, m$.

\section{Proof of balancedness from\texorpdfstring{~\cite{Balancedness}}{[Balancedness]}}
\label{sec: proof of balancedness}
Here, we restate and simplify the argument from~\cite[Appendix A.1]{Balancedness} for the fact that balancedness is preserved when training deep linear networks using gradient flow.  

\begin{theorem*}[Balancedness from~\cite{Balancedness}]
    Let $f(A) =  \langle A, W^{(L)} W^{(L-1)} \ldots W^{(1)}\rangle$ denote an $L$-layer linear network.  Let $f_t$ denote the network trained for time $t$ using gradient flow to minimize mean squared error on data $\{(A_p, y_p)\}_{p=1}^{n}$. If $\{W_0^{(\ell)}\}_{\ell=1}^{L-1}$ are balanced, i.e., ${W_0^{(\ell)}} {W_0^{(\ell)}}^T = {W_0^{(\ell+1)}}^T {W_0^{(\ell+1)}} $, then for any $t > 0$, $\{W_t^{(\ell)}\}_{\ell=1}^{L-1}$ are balanced. 
\end{theorem*}

\begin{proof}
    The gradient flow ODE for the mean square error reads as:
    \begin{align*}
        \frac{dW^{(\ell)}}{dt} = \sum_{p=1}^{n} (y_{p} - f_t(A_p^T))  {W_t^{(\ell+1)}}^T \ldots {W_t^{(L)}}^T A_p {W_t^{(1)}}^T \ldots {W_t^{(\ell-1)}}^T~.
    \end{align*}
    Multiplying the left hand side of the equation above by ${W_t^{(\ell)}}^T$, we find the following equality holds for all $\ell \in [L]$: 
    \begin{align*}
        \frac{dW^{(\ell)}}{dt} {W_t^{(\ell)}}^T = {W_t^{(\ell+1)}}^T \frac{dW^{(\ell+1)}}{dt}~.
    \end{align*}
    Now upon adding the transpose of the above equation to both sides and using the product rule, we learn
    $$\frac{d}{dt}[{W_t^{(\ell)}} {W_t^{(\ell)}}^T - {W_t^{(\ell+1)}}^T {W_t^{(\ell+1)}}]=0\qquad \forall t\geq 0.$$
    Thus, if we initialized such that ${W_0^{(\ell)}}  {W_0^{(\ell)}}^T = {W_0^{(\ell+1)}}^T  {W_0^{(\ell+1)}}$, then ${W_t^{(\ell)}} {W_t^{(\ell)}}^T = {W_t^{(\ell+1)}}^T  {W_t^{(\ell+1)}}$ for any time $t \geq 0$, which concludes the proof.  
\end{proof}

\section{Proof of Theorem~\ref{thm: deep lin-RFM fixed point analysis}}
\label{appendix: proof of deep lin RFM}

In this section, we prove Theorem~\ref{thm: deep lin-RFM fixed point analysis}, which is restated below for the reader's convenience.

\begin{theorem*}
    Let $\phi_{\ell}(s) = (s + \epsilon)^{\alpha_{\ell}}$ for $\epsilon > 0$ and let $C_{\ell} = \sum_{i=1}^{\ell} \prod_{j=1}^{i-1} \alpha_i (1 - 2\alpha_j)$.  Assume $C_{\ell} > 0$ for all $\ell \in [L]$ and that $\sum_{\ell=1}^{L} C_{\ell} < \frac{1}{2}$.  Then, the fixed points of Algorithm~\ref{alg: deep lin RFM matrix completion} are the first-order critical points of the following optimization problem: 
    \begin{align*}
        \argmin_{Z \in \mathbb{R}^{d_1 \times d_2}} \sum_{j=1}^{d_1} \psi_{\epsilon}(\sigma_j(Z)) ~\text{subject to $\mathcal{A}(Z) = y$}~;
    \end{align*}    
    where $\psi_{\epsilon}(r) := \int_{0}^{r} s h_1(s) \ldots h_L(s)\, ds$ and $h_i(s)$ are defined recursively as 
    \begin{align}
        h_1(s) = (s^2 + \epsilon)^{\alpha_1};\qquad  h_{\ell}(s) = \left( s^2 h_1(s)^{-2} \ldots h_{\ell-1}(s)^{-2} + \epsilon \right)^{\alpha_{\ell}} ~ \text{for $\ell \in \{2, \ldots, L\}$}.
    \end{align}
  Moreover, $\lim\limits_{\epsilon \to 0} \psi_{\epsilon}(r) = r^{2 - 4C_1 - \ldots -4C_L}$ for every $r$.  
\end{theorem*}
\begin{proof}
    As in the analysis presented in Section~\ref{sec: fixed point equation of lin-RFM}, let $Z_t = W_t M_t^{(L)} \ldots M_t^{(1)}$.  As the $\phi_{\ell}$ are positive, the matrices $M_t^{(\ell)}$ are all invertible.  These matrices are also symmetric as the AGOP returns a symmetric matrix. Let $Z, W, M^{(L)}, \ldots, M^{(1)}$ denote the value of these parameters at a fixed point.  At the fixed point, $M^{(\ell)}$ can be recursively written as the following function of $Z^TZ$ and $\epsilon$:  
    \begin{align*}
        M^{(\ell)} = \left({M^{(\ell-1)}}^{-1} \ldots {M^{(1)}}^{-1} Z^T Z {M^{(1)}}^{-1} \ldots {M^{(\ell-1)}}^{-1} \right)^{\alpha_{\ell}} \text{with} ~ M^{(1)} = \left(Z^TZ + \epsilon I\right)^{\alpha_1}~.
    \end{align*}
    As a result, the $M^{(\ell)}$ all commute.  Letting $\mathcal{A}^*$ denote the adjoint of $\mathcal{A}$, the first-order optimality conditions of deep lin-RFM are given by
    \begin{align}
    \label{eq: first order optimality condition deep lin RFM}
        2Z {M^{(1)}}^{-2} \ldots {M^{(L)}}^{-2} + \mathcal{A}^*(\lambda) &= 0 \\
        \label{eq: constraints deep lin RFM}
        \mathcal{A}(Z) &= y~. 
    \end{align}
    The singular values of $Z {M^{(1)}}^{-2} \ldots {M^{(L)}}^{-2}$ can be written as a univariate function of $\epsilon$, which we denote $\kappa_{\epsilon}: [0, \infty) \to [0, \infty)$.  Namely, 
    \begin{align*}
        \kappa_{\epsilon}(x) = x h_1(x) \ldots h_L(x) ~~ ; ~~ h_{\ell}(x) = \left( x^2 h_1(x)^{-2} \ldots h_{\ell-1}(x)^{-2} + \epsilon \right)^{\alpha_{\ell}} ~ \text{for $\ell \in \{2, \ldots, L\}$}~;
    \end{align*}
    where $h_1(x) = (x^2 + \epsilon)^{\alpha_1}$.  Note that $\kappa_{\epsilon}(x)$ is integrable as it is continuous and bounded by $g(x) = x^{1 - 4C_1 - \ldots - 4C_{L}}$, which is integrable as $\sum_{\ell=1}^{L} C_{\ell} < \frac{1}{2}$.   Thus, by the result of~\cite[Theorem 1.1]{LewisSpectralDerivatives}, the fixed points of Algorithm~\ref{alg: deep lin RFM matrix completion} are the first order critical points of the objective
    \begin{align*}
        \argmin_{Z \in \mathbb{R}^{d_1 \times d_2}} \sum_{j=1}^{d_1} \psi_{\epsilon}(\sigma_j(Z)) ~ \text{subject to $\mathcal{A}(Z) = y$}~;
    \end{align*}    
    where $\psi_{\epsilon}(x) =  \int_{0}^{x} \kappa_{\epsilon}(s) ds$.  Next, we use the Monotone Convergence Theorem~\cite[3.11]{axler2023measure} to prove that $\psi_{\epsilon}(x)$ converges to $\int_{0}^{x} g(s) ds$ as $\epsilon \to 0$.  In particular, for any decreasing sequence $\{\epsilon_n\}_{n=1}^{\infty}$ such that $\lim\limits_{n \to \infty} \epsilon_n = 0$, we have $\{\kappa_{\epsilon_n}\}_{n=0}^{\infty}$ are an increasing sequence of non-negative functions with $\lim\limits_{n \to \infty} \kappa_{\epsilon_n}(x) = g(x)$.  Hence, the Monotone Convergence Theorem implies that 
    \begin{align*}
        \lim_{n \to \infty} \psi_{\epsilon_n}(x) = \lim_{n \to \infty} \int_{0}^{x} \kappa_{\epsilon_n}(s) \, ds = \int_{0}^{x} g(s)\, ds = x^{2 - 4C_1 - \ldots - 4C_L}~.
    \end{align*}
    Thus, as $\epsilon_n \to 0$, we have $\lim\limits_{\epsilon_n \to 0} \psi_{\epsilon_n}(x) = x^{2 - 4C_1 - \ldots - 4C_L}$, which concludes the proof. 
\end{proof}

\section{Experimental details}
\label{appendix: Training and hyper-parameter details}

We now outline all training and hyper-parameter tuning details for all experiments.  

\paragraph{Experiments for sparse linear regression.}  We now outline the details for experiments in Fig.~\ref{fig: NoiseLess linear reg}.  We trained lin-RFM models for $1000$ iterations and utilized a ridge-regularization parameter of $10^{-10}$ to avoid numerical issues with linear system solvers from NumPy~\cite{numpy2}.  For deep linear diagonal networks, all initial weights were independently and identically distributed (i.i.d.) according to $\mathcal{N}(0, 10^{-10})$.  Networks were trained using gradient descent with a learning rate of $10^{-1}$ for $10^5$ steps.  To reduce computational burden, models were stopped early if they reached below $10^{-3}$ test error.  For minimizing $\ell_1$-norm, we utilized the CVXPy library~\cite{diamond2016cvxpy, agrawal2018rewriting}.  

\paragraph{Experiments for low rank matrix completion.}  We first outline the details for experiments in Figs.~\ref{fig: NoiseLess matrix completion} and~\ref{fig: Matrix completion scaling}.  For lin-RFM, we grid searched over regularization parameters in $\{5 \times 10^{-2}, 3 \times 10^{-2}, 10^{-2}, 5 \times 10^{-3}, 10^{-3}, 5 \times 10^{-4}, 10^{-4}\}$ and trained for up to $10000$ iterations.  We trained deep linear networks using the same width,  initialization scheme, and optimization method used in~\cite{DeepMatrixFactorization}.  Namely, the work~\cite{DeepMatrixFactorization} used networks of width $d$ for $d \times d$ matrices, initialized weights i.i.d. according to $\mathcal{N}(0, \beta)$ where $\beta = (0.001)^{\frac{1}{L}} d^{-.5}$,  and trained using RMSProp~\cite{RmsProp} with a learning rate of $0.001$.  To reduce computational burden, models were stopped early if they reached below $10^{-3}$ test error.  For minimizing nuclear-norm, we utilized the CVXPy library~\cite{diamond2016cvxpy, agrawal2018rewriting}.

\section{Analysis of lin-RFM for special cases of matrix completion}
\label{sec: lin-RFM matrix completion convergence}

In this section, we establish the following results for lin-RFM when $\phi$ is the identity map:
\begin{enumerate}
    \item We prove that lin-RFM can recover low rank matrices in certain cases when nuclear norm cannot. 
    \item We provide a full convergence analysis for lin-RFM for matrices of size $m \times 2$ in which the first row and one of the columns are observed. 
    \item We prove that low rank solutions are fixed point attractors for lin-RFM on matrices of size $m \times 2$.  
\end{enumerate}
While convergence of IRLS implies convergence of lin-RFM when $\phi(s) = s^{\alpha}$ with $\alpha \in \left[ \frac{1}{4}, \frac{1}{2} \right]$, our convergence analysis for the special cases above are for $\alpha = 1$ corresponding to negative values of $p$ in IRLS-$p$, which to the best of our knowledge have not been previously considered. Moreover, in contrast to previous convergence analyses~\cite{MatrixCompletionIterativeLeastSquaresMohan}, our results in the special settings establish local convergence to the low rank solution and quantify the full range of initializations that result in low rank completions. Recall that the goal of matrix completion is to recover a low rank matrix $Y_{\sharp}$ from pairs of observations $\{(M_{ij}, {Y_{\sharp}}_{ij})\}_{(i,j) \in \Omega}$ where $M_{ij}$ are indicator matrices, ${Y_{\sharp}}_{ij}$ denotes the $(i, j)$ entry in $Y_{\sharp}$, and $\Omega$ is a set of observed coordinates.  For our analyses, we will also modify Algorithm~\ref{alg: RFM matrix completion} to enforce that $(W_t M_t)_{\Omega} = {Y_{\sharp}}_{\Omega}$ for all $t$, where for any matrix $A$, $A_{\Omega}$ denotes the vector of entries of $A$ corresponding to coordinates in $\Omega$.  We begin with the following lemma, which presents a class of examples for which $\alpha = 1$ leads to fixed points that recover low rank matrices.

\begin{lemma}
    \label{lemma: RFM alpha 1}
    Let $Y_{\sharp} \in \mathbb{R}^{d \times d}$ have rank $r$ with each column containing at most $r-1$ zeros.  Suppose $r$ rows of $Y_{\sharp}$ are observed and all other rows contain $r$ observations.  If Algorithm~\ref{alg: RFM matrix completion} trained on ${Y_{\sharp}}_{\Omega}$ with $\phi$ as the identity map converges, then it recovers $Y_{\sharp}$. 
\end{lemma}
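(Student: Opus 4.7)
My plan is to characterize any fixed point $Z^\star$ of Algorithm~\ref{alg: RFM matrix completion} with $\phi$ the identity, and show that the fixed-point equation together with the observation structure and the column-zero hypothesis forces $Z^\star = Y_{\sharp}$. The argument proceeds in three stages: deriving a fundamental fixed-point identity, using the support structure of the Lagrange multipliers to constrain the row span of $Z^\star$, and then recovering each individual row uniquely.

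Following the substitution $Z_t = W_t M_t$ from Section~\ref{sec: fixed point equation of lin-RFM}, the update with $\phi$ the identity becomes $M_{t+1} = Z_t^\top Z_t$, and the first-order optimality conditions for the Step 1 linear regression give the Lagrangian representation $W^\star = \sum_i \lambda_i A_i M^\star$ at any fixed point. Substituting into $Z^\star = W^\star M^\star$ produces the identity $Z^\star = \Lambda \bigl((Z^\star)^\top Z^\star\bigr)^2$, where $\Lambda := \sum_i \lambda_i A_i$ is a matrix supported on the observation pattern $\Omega$. Because the modified algorithm enforces $(W_t M_t)_{\Omega} = Y_{\sharp,\Omega}$ at every step, the limit inherits $Z^\star_{\Omega} = Y_{\sharp,\Omega}$.

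The row-by-row support of $\Lambda$ then constrains $Z^\star$ sharply. For each of the $r$ fully observed rows $i$, the identity reads $Y_{\sharp,i,:} = \Lambda_{i,:} (M^\star)^2$ with $\Lambda_{i,:}$ free in $\mathbb{R}^d$, placing each such row in $\mathrm{row}((M^\star)^2) = \mathrm{row}(Z^\star)$; combined with linear independence of the $r$ fully observed rows, which follows from $\mathrm{rank}(Y_{\sharp}) = r$ and the column-zero hypothesis, this yields $\mathrm{row}(Y_{\sharp}) \subseteq \mathrm{row}(Z^\star)$. For each sparse row $i$ with $r$ observations in columns $J_i$, the $i$-th row of $\Lambda$ is supported on $J_i$, so $Z^\star_{i,:}$ lies in the $r$-dimensional subspace spanned by the rows of $(M^\star)^2$ indexed by $J_i$. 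The $r$ scalar observation equations $Z^\star_{i,J_i} = Y_{\sharp,i,J_i}$ then give an $r \times r$ linear system for the coefficients $\Lambda_{i,J_i}$; invertibility, guaranteed by the column-zero condition, uniquely pins down $Z^\star_{i,:}$. Once $\mathrm{row}(Z^\star) = \mathrm{row}(Y_{\sharp})$, every row of $Z^\star$ is a linear combination of the $r$ fully observed rows of $Y_{\sharp}$, and the observation equations force this combination to match $Y_{\sharp,i,:}$, so $Z^\star = Y_{\sharp}$.

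The main obstacle is rigorously establishing the equality $\mathrm{row}(Z^\star) = \mathrm{row}(Y_{\sharp})$, i.e., the upper bound $\mathrm{rank}(Z^\star) \le r$. The fixed-point identity with $\Lambda$ supported on $\Omega$ is a nonlinear algebraic system that can in principle admit higher-rank solutions, and the limiting objective $\psi(s) = -\tfrac{1}{2s^2}$ from Theorem~\ref{theorem: Fixed point equation of lin-RFM} is singular at rank-deficient matrices, so a naive critical-point argument is unavailable. I would circumvent this by combining the row-by-row support structure with the column-zero hypothesis to algebraically exclude higher-rank fixed points consistent with the observation pattern, or alternatively by a limiting argument starting from the regularized $\phi(s) = (s+\epsilon)^{\alpha}$ and passing $\epsilon \to 0$, for which the regularized spectral objective is bounded and classical variational analysis applies.
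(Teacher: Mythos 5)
Your setup is sound and mirrors the paper's starting point: with $\phi$ the identity the fixed point satisfies $Z^\star=\Lambda\bigl((Z^\star)^\top Z^\star\bigr)^2$ with $\Lambda$ supported on $\Omega$, and the easy inclusion $\mathrm{row}(Y_{\sharp})\subseteq\mathrm{row}(Z^\star)$ follows from the $r$ fully observed rows. But the proof has a genuine gap exactly where you flag it: you never establish $\mathrm{rank}(Z^\star)\le r$, you only list two candidate strategies for doing so. That step is not a technicality --- it is the entire content of the lemma, and it is the only place the hypothesis ``each column has at most $r-1$ zeros'' can enter. The paper's proof is devoted almost entirely to it: writing $M=(Z^\star)^\top Z^\star$, the fixed-point recursion becomes $M-MQM=R$ with $R=\sum_{i=1}^{r}Y_{i,:}^\top Y_{i,:}$ of rank $r$ and $Q$ having row and column space inside that of $M$; assuming $\mathrm{rank}(M)=q>r$, a singular-vector and support analysis forces $R$ to have at least $r$ zero diagonal entries, i.e.\ some column of $Y_{\sharp}$ with $r$ zeros, contradicting the hypothesis. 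Without executing an argument of this kind, your proposal does not constitute a proof.

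Two further points. First, your fallback of regularizing with $\phi(s)=(s+\epsilon)^{\alpha}$ and sending $\epsilon\to 0$ would not prove the lemma as stated: the lemma concerns the unregularized iteration itself, and limits of critical points of the regularized spectral objectives need not be fixed points of the $\epsilon=0$ iteration (nor does the singular objective $\psi(s)=-\tfrac{1}{2s^2}$ admit the variational argument you would need at rank-deficient points). Second, your claim that the $r\times r$ system $Z^\star_{i,J_i}=Y_{\sharp,i,J_i}$ is invertible ``by the column-zero condition'' is asserted rather than shown; the column-zero hypothesis constrains zeros of $Y_{\sharp}$, not the conditioning of the submatrices of $(M^\star)^2$ indexed by $J_i$. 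In the paper's argument this issue is sidestepped because the rank bound on $M$ is obtained first, after which uniqueness of the completion follows from the row-space containment.
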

\begin{proof}
     Assume without loss of generality that the first $r$ rows of $Y$ are observed. Let $M_{t, j}$ denote the sub-matrix of $M_t$ consisting of columns whose indices match the column indices of observed entry in row $j$ of $Y_{\Omega}$.  Let $Y_{\Omega(j)}$ denote the vector of observed entries in row $j$ of $Y_{\Omega}$. Then, as the modified Algorithm~\ref{alg: RFM matrix completion} replaces observed entries with their values, 
    \begin{align*}
        M_{t+1} = \sum_{i = 1}^{r} Y_{i,:}^T Y_{i,:} + \sum_{j=r+1}^{d} M_t^T {M_{t, j}^{\dagger}}^T Y_{\Omega(j)}Y_{\Omega(j)}^T M_{t, j}^{\dagger} M_t  ~.
    \end{align*}
    Hence, if Algorithm~\ref{alg: RFM matrix completion} converges, then $M = \lim_{t \to \infty} M_{t}$ is symmetric and satisfies
    \begin{align*}
        M = \sum_{i = 1}^{r} Y_{i,:}^T Y_{i,:}   + \sum_{j=r+1}^{d} M {M_{j}^{\dagger}}^T Y_{\Omega(j)}Y_{\Omega(j)}^T M_{j}^{\dagger} M ~ \implies M  - MQM  =  R~;
    \end{align*}    
    where $Q = \sum_{j=r+1}^{d} {M_{j}^{\dagger}}^T Y_{\Omega(j)}Y_{\Omega(j)}^T M_{j}^{\dagger} $, and $R = \sum_{i = 1}^{r} Y_{i,:}^T Y_{i,:} $.  Note $R$ is of rank $r$ since $Y$ is of rank $r$.  Assume $M$ has rank $q > r$.  Let $M = \sum_{i=1}^{q} \sigma_i v_i v_i^T$ given by the singular value decomposition.  Now observe that $Q$ has row space and column space contained in the span of $\{v_i\}_{i=1}^{q}$ since the column space of $M_j$ is contained in the column space of $M$.  Now as $M$ and $MQM$ have the same left and right singular vectors, $Q$ must have rank $q-r$ since otherwise $M - MQM$ cannot be of rank $r$.  Thus, we conclude that the column vectors $\{M_j\}_{j=r+1}^{d}$ must lie in the span of $q-r$ elements in $\{v_i\}_{i=1}^{q}$.  Without loss of generality we can assume these elements are $\{v_i\}_{i=1}^{q-r}$.  Thus, there exist at least $r$ columns of $M$ such that $v_j^T M$ is zero in each of these columns for $j \in \{q-r+1, \ldots, q\}$. On the other hand, note that $v_j^T M = \sigma_j v_j^T$ by the singular value decomposition of $M$.  Hence, we conclude that there exist at $r$ columns such that $v_j^T$ is zero in each of these columns for all $j \in \{q-r+1, \ldots, q\}$.  As $M - MQM = R$, $R = \sum_{i=q-r+1}^{q} \sigma_i(R) v_i v_i^T$ where $\{\sigma_i(R)\}_{i=q-r+1}^{q}$ denotes the singular values of $r$ and so, we conclude that there exist at least $r$ coordinates $A = \{a_i\}_{i=1}^r \subset [d]$ such that $R$ contains zeros in rows and columns indexed by $A$.  Hence, there exist at least $r$ diagonal entries of $R$ that are zero.  As the $j$\textsuperscript{th} diagonal entry of $R$ is given by $\sum_{i=1}^{r} Y_{ij}^2$, this implies that there is at least one column of $Y$ consisting of $r$ zeros, which is a contradiction since we assumed $Y$ has at most $r-1$ zeros per column.  
\end{proof}

\noindent We now provide a simple example demonstrating that minimum nuclear norm will not necessarily recover a rank $r$ matrix in the setting of Lemma~\ref{lemma: RFM alpha 1}.

\begin{example}
Let $Y_{\sharp} = \begin{bmatrix} 1 & 1 & 1 \\ 1 & 1 & 1 \\ 1 & 1 & 1 \end{bmatrix}$ and let $\tilde{Y} = \begin{bmatrix} 1 & 1 & 1 \\ 1 & ? & ? \\ 1 & ? & ?  \end{bmatrix}$~.  The minimum nuclear norm solution is given by the rank 3 matrix $Y_{nn} = \begin{bmatrix} 1 & 1 & 1 \\ 1 & \frac{1}{2} & \frac{1}{2} \\ 1 & \frac{1}{2} & \frac{1}{2}  \end{bmatrix}$ with $\| Y_{nn}\|_* = \sqrt{3 - 2\sqrt{2}} + \sqrt{3 + 2\sqrt{2}} \approx 2.83$.  In contrast, Lemma~\ref{lemma: RFM alpha 1} implies that if Algorithm~\ref{alg: RFM matrix completion} converges, it will recover $Y$ exactly from $\tilde{Y}$.  
\end{example}

Next, we analyze the case of matrices of size $m \times 2$ with nonzero entries and prove that lin-RFM with $\phi$ as the identity map converges to the rank $1$ solution for a large range of initializations $M_0$, as is shown by the following proposition.

\begin{prop}
\label{prop: convergence}
    Let $Y \in \mathbb{R}^{m \times 2}$ such that $Y_{11} = a, Y_{12} =  b$ are observed and that $Y_{i2}$ are observed for $i \in [m]$.  Let $s$ denote the sum of squares of observed entries in rows 2 through $m$ and assume $a, b, s \neq 0$ with $\frac{b}{a} > 0$.  Let $\hat{Y}(t)$ denote the solution after $t$ steps of Algorithm~\ref{alg: RFM matrix completion} with $L = 2$, $\phi$ as the identity map, and $M_0 = \begin{bmatrix} u & \epsilon \\ \epsilon & v\end{bmatrix}$.  As $t \to \infty$, Algorithm~\ref{alg: RFM matrix completion} converges to $\hat{Y}$ that is rank $1$ if and only if  $\frac{u\epsilon + v\epsilon}{\epsilon^2 + v^2} > \frac{-b(a^2 +b^2 + s)}{as}$.  
\end{prop}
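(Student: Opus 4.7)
The plan is to collapse the full matrix iteration to a scalar recursion $k_{t+1}=f(k_t)$ and analyze its one-dimensional dynamics. Using the reformulation in Eq.~\eqref{eqn:simplerewrite} together with the modification enforcing $(W_tM_t)_\Omega=Y_\Omega$, the iterate $Z_t:=W_tM_t$ minimizes $\|ZP_t^{-1}\|_F^2=\sum_i z_i P_t^{-2}z_i^\top$ subject to $Z_\Omega=Y_\Omega$, with $P_0=M_0$ and $P_t=Z_{t-1}^\top Z_{t-1}$ for $t\geq 1$. Since the objective is separable across rows and only the entries $Z_{t,(i,1)}$ for $i\geq 2$ are unknown, row-wise first-order optimality gives $Z_{t,(i,1)}=-(P_t^{-2})_{12}/(P_t^{-2})_{11}\cdot Y_{i,2}$. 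Hence every iterate $Z_t$ has first row $(a,b)$ and row $i\geq 2$ of the form $(k_t Y_{i,2},Y_{i,2})$ for a single scalar $k_t$, and inverting $M_0^{-2}$ directly gives $k_0=\epsilon(u+v)/(v^2+\epsilon^2)$, which is precisely the left-hand side of the stated inequality.

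Next I derive the recursion and its fixed points. A direct computation yields
\[
Z_t^\top Z_t = \begin{pmatrix} a^2+k_t^2 s & ab+k_ts \\ ab+k_ts & b^2+s\end{pmatrix},
\]
and applying the decoupled minimization with $P_{t+1}=Z_t^\top Z_t$ gives
\[
k_{t+1}=f(k_t):=\frac{(ab+k_ts)\bigl(a^2+b^2+s(1+k_t^2)\bigr)}{(b^2+s)^2+(ab+k_ts)^2}.
\]
Clearing denominators in $f(k)=k$ produces a quadratic in $k$ whose roots, by Vieta's formulas, are $k^*=a/b$ (the rank-one value) and $k^{**}=-b(a^2+b^2+s)/(as)$ (the threshold in the statement). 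This immediately factors
\[
f(k)-k \;=\; \frac{-abs\,(k-k^*)(k-k^{**})}{(b^2+s)^2+(ab+ks)^2},
\]
so $f(k)>k$ exactly on $(k^{**},k^*)$.

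Two further factorizations pin down the global dynamics. Writing $f(k)-k^{**}=[f(k)-k]+(k-k^{**})$ and simplifying gives
\[
f(k)-k^{**} \;=\; (k-k^{**})\cdot\frac{s^2k^2+absk+(b^2+s)(a^2+b^2+s)}{(b^2+s)^2+(ab+ks)^2},
\]
whose inner quadratic has strictly negative discriminant (since $b^2-4(b^2+s)<0$) and is therefore strictly positive; hence $f(k)-k^{**}$ has the sign of $k-k^{**}$, and each of $(-\infty,k^{**})$ and $(k^{**},\infty)$ is forward invariant. Analogously,
\[
f(k)-k^* \;=\; \frac{(bk-a)\,s\,(sk^2+abk+b^2+s)}{b\bigl((b^2+s)^2+(ab+ks)^2\bigr)},
\]
whose inner quadratic has product of roots $(b^2+s)/s>0$ and sum of roots $-ab/s<0$, so any real roots are negative; in particular, $f(k)>k^*$ for every $k>k^*$, so $(k^*,\infty)$ is also forward invariant.

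Combining these sign analyses gives a four-case orbit picture. If $k_0>k^*$, the orbit stays in $(k^*,\infty)$ and strictly decreases toward the unique fixed point $k^*$. If $k_0\in(k^{**},k^*]$, the orbit strictly increases and either crosses into $(k^*,\infty)$ after one step (reducing to the prior case) or remains bounded in $(k^{**},k^*]$, in which case monotone convergence forces $k_t\to k^*$ (the only fixed point in the closure). In both cases $Z_t$ converges to a rank-one matrix. If $k_0=k^{**}$, then $k_t\equiv k^{**}$ and the limit is rank two since $k^{**}<0<a/b$. Finally if $k_0<k^{**}$, the orbit strictly decreases in a forward-invariant half-line with no finite fixed-point limit, so $k_t\to-\infty$, $\|Z_t\|_F\to\infty$, and $\hat Y(t)$ does not converge. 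Since $Z_t$ is rank one iff $k_t=a/b$, convergence to a rank-one matrix occurs exactly when $k_0>k^{**}$, matching the stated inequality. The main obstacle is the non-monotonic behaviour when $k_0\in(k^{**},k^*)$: a single step may overshoot $k^*$, and the auxiliary factorization of $f(k)-k^*$ above—together with the resulting forward invariance of $(k^*,\infty)$—is precisely what closes that gap.
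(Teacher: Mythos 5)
Your proof is correct. It follows the paper's overall strategy---collapse the iteration to a scalar recursion $k_{t+1}=f(k_t)$ with the same $f$, the same two finite fixed points $k^*=a/b$ and $k^{**}=-b(a^2+b^2+s)/(as)$, and the same identification of the threshold with your $k_0$ (the paper's $k_1$)---but you settle the basin of attraction by a genuinely different and more complete argument. The paper linearizes at the fixed points ($f'(a/b)=s/(b^2+s)<1$, $f'(k^{**})=(a^2+b^2+s)/(b^2+s)>1$) and then asserts the basin is $(k^{**},\infty)$ from the sign of $f(x)-x$; on its own this does not address an orbit started in $(k^{**},k^*)$ overshooting $k^*$, nor rule out an orbit below $k^{**}$ re-entering the basin. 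Your additional factorizations of $f(k)-k^{**}$ and $f(k)-k^*$ both check out: with $D=(b^2+s)^2+(ab+ks)^2$ one gets $D-as(bk-a)=s^2k^2+absk+(b^2+s)(a^2+b^2+s)$ and $D-abs(k-k^{**})=s\,(sk^2+abk+b^2+s)$, which yield global forward invariance of $(-\infty,k^{**})$, $(k^{**},\infty)$, and $(k^*,\infty)$; monotonicity plus the absence of other fixed points then closes every case, including the overshoot, so you obtain a fully rigorous ``if and only if'' where the paper's version is partly informal. You also work with general $m$ (i.e., with $s$) from the outset rather than proving $m=2$ and substituting $c^2\mapsto s$, and you derive the row decoupling from the reformulation \eqref{eqn:simplerewrite} rather than by induction on the explicit form of $M_t$; these are equivalent. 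Two harmless slips to fix: the negative-discriminant justification should read $a^2b^2<4(b^2+s)(a^2+b^2+s)$ rather than ``$b^2-4(b^2+s)<0$'', and ``crosses into $(k^*,\infty)$ after one step'' should be ``at some finite step''.
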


\begin{proof}
    We first prove the statement for $d = 2$ assuming $Y_{22} = c$.  As in the proof of Lemma~\ref{lemma: RFM alpha 1},  we utilize the following form of the updates for $M_t$: 
    \begin{align*}
            M_{t+1} = Y_{1:}^T Y_{1:} + M_{t}^T {M_{t, 2}^{\dagger}}^T Y_{22}^2 M_{t, 2}^{\dagger} M_t~;
    \end{align*}
    where $Y_{1:}$ denotes the first row of $Y$ and $M_{t, 2}$ denotes the second column of $M_t$.  We will show by induction that $M_{t}$ for $t \geq 1$ has the form: 
    \begin{align*}
        M_t = \begin{bmatrix} a^2 + k_t^2c^2  & ab + k_tc^2  \\ ab + k_tc^2 & b^2  + c^2\end{bmatrix} ~. 
    \end{align*}
    We begin with the base case for $t = 1$.  In particular, we have: 
    \begin{align*}
        M_1 &= \begin{bmatrix} a^2   & ab \\ ab & b^2\end{bmatrix} +  \frac{c^2}{(\epsilon^2 + v^2)^2} \begin{bmatrix} u & \epsilon \\ \epsilon & v\end{bmatrix}  \begin{bmatrix}  \epsilon \\ v\end{bmatrix}  \begin{bmatrix}  \epsilon & v\end{bmatrix}  \begin{bmatrix} u & \epsilon \\ \epsilon & v\end{bmatrix} \\
        &= \begin{bmatrix} a^2  + \frac{(u\epsilon + v\epsilon)^2 c^2}{(\epsilon^2 + v^2)^2} & ab + \frac{(u\epsilon + v\epsilon) c^2}{\epsilon^2 + v^2} \\ ab + \frac{(u\epsilon + v\epsilon)c^2}{\epsilon^2 + v^2}  & b^2  + c^2 \end{bmatrix}~.
    \end{align*}
    Hence, the statement holds for $k_1 = \frac{u\epsilon + v\epsilon}{\epsilon^2 + v^2}$ provided $u\epsilon + v\epsilon \neq 0$. Now we assume that the statement holds for $t$ and prove it holds for iteration $t+1$.  Namely, we have: 
    \begin{align*}
         M_{t+1} &= \begin{bmatrix} a^2   & ab \\ ab & b^2\end{bmatrix} +  \frac{c^2}{(k_tc^2 + ab)^2 + (c^2 + b^2)^2} \begin{bmatrix} a^2 + k_t^2c^2 & ab + k_tc^2 \\ ab + k_tc^2  & b^2 + c^2\end{bmatrix}  \begin{bmatrix}  ab + k_tc^2 \\ b^2 + c^2\end{bmatrix}  \\
         & \hspace{61mm} \cdot \begin{bmatrix}  ab + k_tc^2 & b^2 + c^2\end{bmatrix} \begin{bmatrix} a^2 + k_t^2 c^2 & ab + k_tc^2 \\ ab + k_tc^2  & b^2 + c^2\end{bmatrix} \\
         &= \begin{bmatrix} a^2  + \frac{c^2(a^2 + b^2 + c^2 + k_t^2c^2)^2 (ab + k_tc^2)^2}{(ab + k_tc^2)^2 + (c^2 + b^2)^2} &  ab + \frac{(a^2 + b^2 + c^2 + k_t^2c^2) (ab + k_tc^2)}{(ab + k_tc^2)^2 + (c^2 + b^2)} \\ ab + \frac{(a^2 + b^2 + c^2 + k_t^2c^2)^2 (ab + k_tc^2)^2}{(ab + k_tc^2)^2 + (c^2 + b^2)^2} & b^2 + c^2 \end{bmatrix}~.
    \end{align*}
    Hence, the statement holds for $k_{t+1} = \frac{(a^2 + b^2 + c^2 + k_t^2c^2) (ab + k_tc^2)}{(ab + k_tc^2)^2 + (c^2 + b^2)^2}$.  Thus, induction is complete.  Next, we analyze the fixed points of the iteration for $k_{t}$.  In particular, letting $x = \lim_{t \to \infty} k_t$, we have
    \begin{align*}
        x = \frac{(a^2 + b^2 + c^2 + x^2c^2) (ab + xc^2)}{(ab + xc^2)^2 + (c^2 + b^2)^2} \implies (ac^2x + b(a^2 + b^2 + c^2))(bx - a) = 0
    \end{align*}
    for real values of $a, b$.  Hence there are two finite fixed points corresponding to $x = \frac{a}{b}$ and $x = \frac{-b(a^2 +b^2 + c^2)}{ac^2}$ and fixed points at $\pm \infty$.  Next, we analyze the stability of each of these fixed points by analyzing the derivative of $f(x) = \frac{(a^2 + b^2 + c^2 + x^2 c^2) (ab + xc^2)}{(ab + xc^2)^2 + (c^2 + b^2))}$ at each of the fixed points.  In particular, we find that: 
    \begin{align*}
        &f'\left( \frac{a}{b} \right) = \frac{c^2}{b^2 + c^2} ~; ~ f'\left(\frac{-b(a^2 +b^2 + c^2)}{ac^2} \right) = \frac{a^2 + b^2 + c^2}{b^2 + c^2} ~;~ \lim_{x \to \infty} f'(x) = 1 ~;~ \lim_{x \to -\infty} f'(x) = 1 ~.
    \end{align*}
    Hence, we conclude that $x = \frac{a}{b}$ is an attractor, $x =\frac{-b(a^2 +b^2 + c^2)}{ac^2}$ is a repeller, and $x = \pm \infty$ are neutral fixed points.  Next, by computing the regions for which $f(x) < x$ and $f(x) > x$ and noting that $f(x) > 0$ for $x \geq 0$, we conclude that the basin of attraction for $\frac{a}{b}$ is $x \in \left(\frac{-b(a^2 +b^2 + c^2)}{ac^2},  \infty \right)$.  If $k_1 > \frac{-b(a^2 +b^2 + c^2)}{ac^2}$, then $\lim\limits_{t \to \infty} k_t = \frac{a}{b}$, otherwise $\lim\limits_{t \to \infty} k_t = -\infty$.  To conclude the proof for the case of $d = 2$, we show that $\lim_{t \to \infty} \hat{Y}(t)_{21} = c \frac{a}{b}$ when $k_1 > \frac{-b(a^2 +b^2 + c^2)}{ac^2}$.   Next, for any finite $t$, we compute the entry $Y_{21}$ as a function of $k_t$.  Recalling that our solution is of the form $\hat{Y}(t) = W_t M_t$, we have: 
    \begin{align*}
       & \begin{bmatrix}  W_{t, 21} & W_{t, 22} \end{bmatrix} =  \begin{bmatrix}  \frac{c(a b + k_tc^2)}{(a b + k_tc^2)^2 + (b^2 + c^2)^2}  & \frac{c(b^2 + c^2)}{(a b + k_tc^2)^2 + (b^2 + c^2)^2} \end{bmatrix} \\
       &\implies \hat{Y}(t)_{21} =  \frac{c(a^2 + b^2 + c^2 + k_t^2c^2) (ab + k_tc^2)}{(ab + k_tc^2)^2 + (c^2 + b^2)^2} = cf(k_t)~.
    \end{align*}
    Since we initialize such that $k_1 > \frac{-b(a^2 +b^2 + c^2)}{ac^2}$,  $\lim\limits_{t \to \infty} k_t = \frac{a}{b}$ and so, $\lim\limits_{t \to \infty} \hat{Y}(t)_{21} = c \frac{a}{b}$, which concludes the proof for $d = 2$.  Now, suppose that $d > 2$ and without loss of generality that entries in column $2$ of $Y$ are all observed.  Then, we have
    \begin{align*}
            M_{t+1} = Y_{1:}^T Y_{1:} + M_{t}^T {M_{t, 2}^{\dagger}}^T \left(\sum_{i=2}^{d} Y_{i2}^2 \right) M_{t, 2}^{\dagger} M_t =   Y_{1:}^T Y_{1:} + M_{t}^T {M_{t, 2}^{\dagger}}^T  M_{t, 2}^{\dagger} M_t s~.
    \end{align*}
    Thus, the result holds upon replacing $c^2$ with $s$, which concludes the proof.
\end{proof}

Note that we do not require any regularization to avoid singularities in the objective, which as shown in Corollary~\ref{corollary: Fixed point matrix powers} corresponds to a negative hyperbolic objective.  The following proposition extends the analysis of the Proposition~\ref{prop: convergence} to the setting where either column can contain observed entries.

\begin{figure}[!t]
    \centering
    \includegraphics[width=\textwidth]{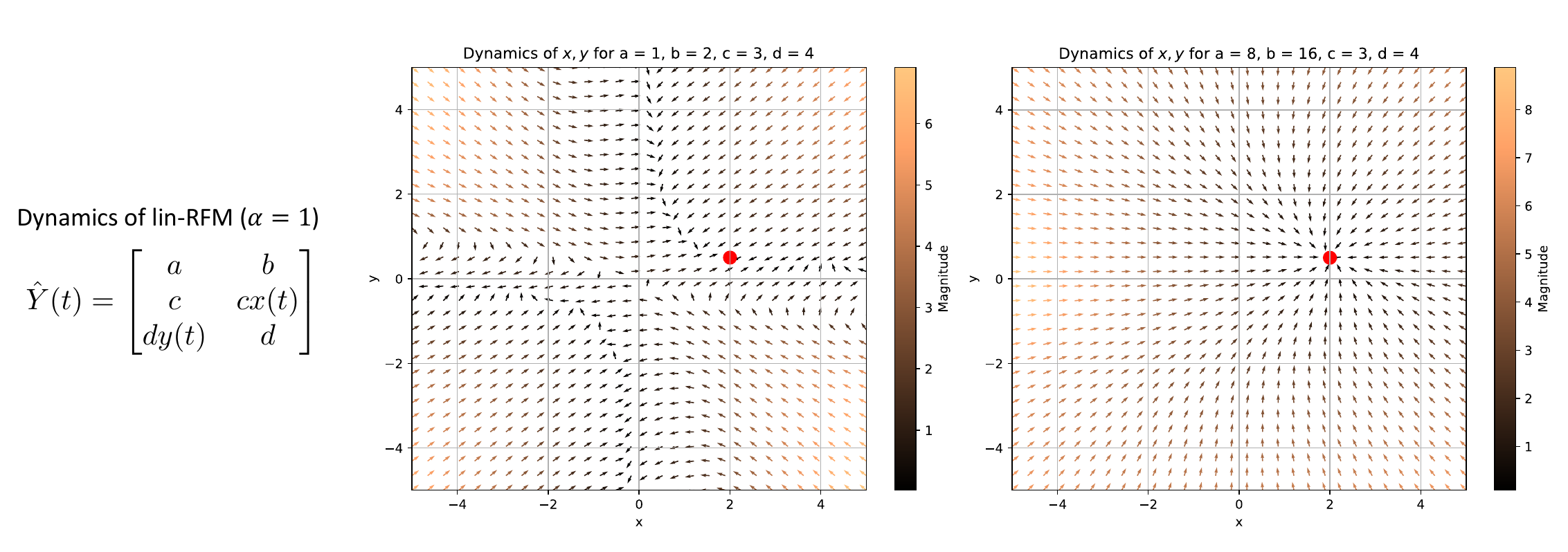}
    \caption{Examples of basins of attraction for lin-RFM with $\phi$ as the identity map for matrices $Y \in \mathbb{R}^{3 \times 2}$ with $Y_{11} = a, Y_{12} = b, Y_{21} = c, Y_{32} = d$ with $a, b, c, d > 0$.  Such matrices are a subset of those analyzed in Proposition~\ref{prop: convergence any obs} for which we proved the rank 1 solution is a fixed point attractor.  In this setting, lin-RFM is governed by a two dimensional dynamical system involving variables $x(t), y(t)$.  We plot vector field $f(x, y)$ governing the evolution of $x(t), y(t)$ for various values of $a, b, c, d$ and plot in red the attractor $\mathbf{u} := \left(\frac{b}{a} , \frac{a}{b}\right)$ corresponding to rank 1 solution.  Note that $x(0) = y(0) = 0$ when $M_0 = I$ in lin-RFM.  The strength of the attractor is given by the maximum eigenvalue of the Jacobian of $f$ at $\mathbf{u}$, which is $\frac{a^2 d^2 + b^2 c^2}{a^2 b^2 + a^2 d^2 + b^2 c^2}$.  This eigenvalue grows smaller when $a, b$ increase and leads to a stronger attractor for the rank 1 solution. This behavior is shown in the right hand plot where we scale the values of $a, b$ without altering $c, d$.}
    \label{fig: Attractor Analysis}
\end{figure}

\begin{prop}
\label{prop: convergence any obs}
    Let $Y \in \mathbb{R}^{m \times 2}$ such that $Y_{11} = a, Y_{12} =  b$ with $a, b > 0$ are observed and that one entry in each of the remaining rows is observed.  The unique rank 1 reconstruction for $Y$ is a fixed point attractor of Algorithm~\ref{alg: RFM matrix completion}.   
\end{prop}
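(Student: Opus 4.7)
The plan is to reduce the iteration to a two-dimensional dynamical system and show that the Jacobian of that map at the rank-1 completion has spectral radius strictly less than one.

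First I would make the update of Algorithm~\ref{alg: RFM matrix completion} with $\phi$ the identity fully explicit. Let $I_1, I_2 \subset \{2,\ldots,m\}$ denote the rows at which column $1$ or column $2$ is observed, with values $c_i$ and $d_i$ respectively, and set $S_1=\sum_{i\in I_1}c_i^2$, $S_2=\sum_{i\in I_2}d_i^2$. Because the constraints $(W_tM_t)_\Omega=Y_\Omega$ decouple across rows of $W_t$, the minimum-norm $W_t$ can be solved row by row: row $1$ equals $(a,b)M_t^{-1}$, row $i\in I_1$ equals $c_i M_te_1/(M_t^2)_{11}$, and row $i\in I_2$ equals $d_iM_te_2/(M_t^2)_{22}$. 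Substituting into $M_{t+1}=M_tW_t^TW_tM_t$ and using symmetry of $M_t$ yields
\begin{equation*}
M_{t+1}=\begin{bmatrix}a^2 & ab\\ ab & b^2\end{bmatrix}+\frac{S_1}{((M_t^2)_{11})^2}M_t^2 e_1e_1^TM_t^2+\frac{S_2}{((M_t^2)_{22})^2}M_t^2e_2e_2^TM_t^2.
\end{equation*}

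Next, writing $M_t=\begin{bmatrix}p_t & q_t\\q_t & r_t\end{bmatrix}$ and setting $u_t=q_t(p_t+r_t)/(p_t^2+q_t^2)$, $v_t=q_t(p_t+r_t)/(q_t^2+r_t^2)$, a direct computation turns the display above into the clean recursion
\begin{equation*}
p_{t+1}=a^2+S_1+S_2v_t^2,\qquad q_{t+1}=ab+S_1u_t+S_2v_t,\qquad r_{t+1}=b^2+S_1u_t^2+S_2,
\end{equation*}
so that for $t\ge 1$ the pair $(u_{t+1},v_{t+1})$ is a smooth function of $(u_t,v_t)$ alone. Moreover, the current reconstruction $W_tM_t$ has first row $(a,b)$, row $i\in I_1$ equal to $(c_i,c_iu_t)$, and row $i\in I_2$ equal to $(d_iv_t,d_i)$, so the unique rank-1 completion of $Y$ corresponds precisely to $(u^*,v^*)=(b/a,a/b)$. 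Plugging this in yields $p^*r^*=(q^*)^2$, i.e.\ $M^*$ is rank one, so $p_{t+1}^2+q_{t+1}^2=p_{t+1}(p_{t+1}+r_{t+1})$ at the fixed point collapses $u_{t+1}$ to $q_{t+1}/p_{t+1}=b/a$; the same holds for $v_{t+1}$, confirming that $(u^*,v^*)$ is fixed.

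Finally, I would linearize the two-dimensional map at $(u^*,v^*)$: differentiating the polynomial identity $u_{t+1}(p_{t+1}^2+q_{t+1}^2)=q_{t+1}(p_{t+1}+r_{t+1})$ and its $v$-analogue with respect to $(u_t,v_t)$, and using $p^*r^*=(q^*)^2$ at the fixed point, yields a $2\times 2$ Jacobian whose dominant eigenvalue is
\begin{equation*}
\lambda=\frac{S_1b^2+S_2a^2}{a^2b^2+S_1b^2+S_2a^2}\in(0,1),
\end{equation*}
which specializes, in the $3\times 2$ case of Figure~\ref{fig: Attractor Analysis}, to the expression $(a^2d^2+b^2c^2)/(a^2b^2+a^2d^2+b^2c^2)$ quoted in its caption. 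The conclusion then follows from the standard linearization theorem for discrete-time dynamical systems. The main obstacle is the Jacobian computation itself: the maps $u_{t+1},v_{t+1}$ are rational functions whose derivatives near the rank-one fixed point involve cancellations; I would handle this by implicit differentiation of the polynomial identities above rather than differentiating the explicit ratios, then substituting the rank-one relation $p^*r^*=(q^*)^2$ to extract the Jacobian in closed form.
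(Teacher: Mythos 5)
Your proposal is correct and follows essentially the same route as the paper's proof: both reduce the iteration to a two-dimensional map in the ratios $(u_t,v_t)$ (the paper's $(x_t,y_t)$, with $S_1,S_2$ playing the role of $c^2,d^2$), verify that $(b/a,a/b)$ is a fixed point via the rank-one relation, and conclude from the Jacobian having dominant eigenvalue $\bigl(S_1b^2+S_2a^2\bigr)/\bigl(a^2b^2+S_1b^2+S_2a^2\bigr)<1$, which matches the paper's expression. The only differences are cosmetic: you handle general $m$ directly through $S_1,S_2$ where the paper first does $m=3$ and then substitutes, and you propose implicit differentiation of the polynomial identities in place of the paper's direct evaluation of the rational map's Jacobian.
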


\begin{proof}
    The proof is an extension of the proof of Proposition~\ref{prop: convergence}.  We again start by proving the statement for $m = 3$ and assuming the following observation structure in $Y$:
    \begin{align*}
        Y =     \begin{bmatrix}  a & b \\ c & ? \\ ? & d \end{bmatrix}~.        
    \end{align*}
    Note that $M_t \in \mathbb{R}^{2 \times 2}$ in this setting.  For the observation pattern above Algorithm~\ref{alg: RFM matrix completion} proceeds to update $M_t$ as follows:
    \begin{align*}
            M_{t+1} = Y_{1:}^T Y_{1:} + M_{t}^T {M_{t, 1}^{\dagger}}^T Y_{21}^2 M_{t, 1}^{\dagger} M_t + M_{t}^T {M_{t, 2}^{\dagger}}^T Y_{32}^2 M_{t, 2}^{\dagger} M_t~;
    \end{align*}   
    where $Y_{1:}$ denotes the first row of $Y$ and $M_{t, i}$ denotes column $i$ of $M_t$.  We will now prove using induction that 
    \begin{align*}
        M_t = \begin{bmatrix} a^2 + c^2 + y_t^2 d^2 & ab + x_t c^2  + y_t d^2  \\ ab + x_t c^2  + y_t d^2 & b^2 + d^2 + x_t^2 c^2\end{bmatrix}~;
    \end{align*}
    with 
    \begin{align*}
        x_{t+1} &= \frac{(a^2 + b^2 + c^2 + d^2 + x_t^2 c^2 + y_t^2 d^2) (ab + x_t c^2 + y_t d^2)}{(ab + x_t c^2 + y_t d^2)^2 + (a^2 + c^2 + y_t^2 d^2)^2 } ; \\
        y_{t+1} &= \frac{(a^2 + b^2 + c^2 + d^2 + x_t^2 c^2 + y_t^2 d^2) (ab + x_t c^2 + y_t d^2)}{(ab + x_t c^2 + y_t d^2)^2 + (b^2 + d^2 + x_t^2 d^2)^2 }~.       
    \end{align*}
    We begin with the base case of $t = 1$.  For $t = 0$, we start with $M_0 = \begin{bmatrix} u & \epsilon \\ \epsilon & v \end{bmatrix}$.  After one step, by direct computation, we have 
    \begin{align*}
        M_{1} &= \begin{bmatrix} a^2 + c^2 + y_1^2 d^2 & ab + c^2 x_1 + d^2 y_1 \\ ab + c^2 x_1 + d^2 y_1 & b^2 + d^2 + x_1^2 c^2 \end{bmatrix} ~~;~~
        x_1 = \frac{(u + v) \epsilon}{u^2 + \epsilon^2} ~~;~~ 
        y_1 = \frac{(u + v) \epsilon}{v^2 + \epsilon^2} ~.
    \end{align*}
    Hence, the base case holds.  We now prove the statements for time $t+1$ assuming the form for $M_t$.  To simplify notation let $M_t = \begin{bmatrix} A & B \\ B & C\end{bmatrix} $ where $A := a^2 + c^2 + y_t^2 d^2$, $B := ab + x_t c^2  + y_t d^2$ and $C := b^2 + d^2 + x_t^2 c^2$.  Then, 
    \begin{align*}
         M_{t+1} &= \begin{bmatrix} a^2   & ab \\ ab & b^2\end{bmatrix} +  \frac{c^2}{(A^2 + B^2)^2} \begin{bmatrix} (A^2 + B^2)^2 & (A^2 + B^2) (AB + BC) \\ (A^2 + B^2) (AB + BC) & (AB + BC)^2 \end{bmatrix} \\
         &\hspace{19mm} + \frac{d^2}{(B^2 + C^2)^2} \begin{bmatrix} (AB + BC)^2 & (B^2 + C^2) (AB + BC) \\ (B^2 + C^2) (AB + BC) & (B^2 + C^2)^2 \end{bmatrix}.
    \end{align*}
    Hence, by letting $x_{t+1} = \frac{AB + BC}{A^2 + C^2}$ and $y_{t+1} = \frac{AB + BC}{B^2 + C^2}$ and substituting back the values for $A, B, C$, we conclude
    \begin{align*}
        M_{t+1} &= \begin{bmatrix} a^2  + c^2 + d^2 y_{t+1}^2 & ab + c^2 x_t + d^2 y_t \\ ab + c^2 x_t + d^2 y_t & b^2 + d^2 + c^2 x_{t+1}\end{bmatrix} ; \\
        x_{t+1} &= \frac{(a^2 + b^2 + c^2 + d^2 + x_t^2 c^2 + y_t^2 d^2) (ab + x_t c^2 + y_t d^2)}{(ab + x_t c^2 + y_t d^2)^2 + (a^2 + c^2 + y_t^2 d^2)^2 } ; \\
        y_{t+1} &= \frac{(a^2 + b^2 + c^2 + d^2 + x_t^2 c^2 + y_t^2 d^2) (ab + x_t c^2 + y_t d^2)}{(ab + x_t c^2 + y_t d^2)^2 + (b^2 + d^2 + x_t^2 d^2)^2 }~.       
    \end{align*}
    Hence, the evolution of $M_t$ is governed by the two dimensional dynamics given by the map
    \begin{align*}
        f(x, y) = \Bigg( &\frac{(a^2 + b^2 + c^2 + d^2 + x^2 c^2 + y^2 d^2) (ab + x c^2 + y d^2)}{(ab + x c^2 + y d^2)^2 + (a^2 + c^2 + y^2 d^2)^2 }, \\
        & \frac{(a^2 + b^2 + c^2 + d^2 + x^2 c^2 + y^2 d^2) (ab + x c^2 + y d^2)}{(ab + x c^2 + y d^2)^2 + (b^2 + d^2 + x^2 d^2)^2 } \Bigg).
    \end{align*}    
    By direct evaluation, $f\left(\frac{b}{a} , \frac{a}{b} \right) = \left(\frac{b}{a} , \frac{a}{b} \right) $ and so the point $\mathbf{u} = \left( \frac{b}{a} , \frac{a}{b} \right)$ is a fixed point of $f(x, y)$.  Hence, all that remains is to show that the fixed point is an attractor.  This can be done by showing that the Jacobian of $f$ at the point $\mathbf{u}$ has top eigenvalue less than $1$.  Again, through direct evaluation, the Jacobian of $f$ is rank $1$ at this point, and the top eigenvalue is given by $\frac{a^2 d^2 + b^2 c^2}{a^2 b^2 + a^2 d^2 + b^2 c^2}$, which is less than $1$ for all values of $a, b, c, d$.  Thus, we have proved the proposition for the case when $m = 3$.  As in the proof of Proposition~\ref{prop: convergence}, note that this analysis extends to the case of arbitrary $m$ by replacing $c^2$ with $s_1$ and $d^2$ with $s_2$ where $s_1, s_2$ is the sum of squares of observations in columns $1$ and $2$, respectively, after excluding $a, b$ in the first row. 
\end{proof}

\paragraph{Remarks.} Unlike the case for Proposition~\ref{prop: convergence} where Algorithm~\ref{alg: RFM matrix completion} can be analyzed using a one dimensional dynamical system, the above case results in a more involved two dimensional dynamical system.  Nevertheless, we empirically find that the rank 1 reconstruction again contains a large basin of attraction for many instantiations of $a, b$ and $M_0$, including $M_0 = I$ (see Fig.~\ref{fig: Attractor Analysis}).  Moreover, it remarkably appears that the rank 1 solution is the only finite fixed point attractor for initializations $M_0$ with positive entries. Interestingly, the strength of the attractor corresponding to the low rank solution is governed by the magnitude of observations, with larger values of $a, b$ resulting in stronger attractors.

Lastly, we prove that lin-RFM with $\phi$ as the identity map can result in a completion that drives all norms (and penalties) to infinity when the matrix $Y_{\sharp}$ contains entries that are zero. This behavior matches that of deep linear networks trained using gradient descent~\cite{razin2020implicit} that do not necessarily learn minimum $\ell_2$ norm solutions in weight space.  In particular, we consider the example from~\cite{razin2020implicit} in which the goal is to impute the missing entry (denoted as ?) in the following matrix: 
\begin{align*}
    \tilde{Y} = \begin{bmatrix} 1  & 0 \\ ? & 1 \end{bmatrix}~.
\end{align*}
The following lemma proves that lin-RFM with $\phi$ as the identity map drives this missing entry to infinity.

\begin{lemma}
    \label{lemma: all norms to infty}
    Let $Y \in \mathbb{R}^{2 \times 2}$ such that $Y_{11} = 1, Y_{12} = 0, Y_{22} = 1$.  Let $\hat{Y}(t)$ denote the solution after $t$ steps of Algorithm~\ref{alg: RFM matrix completion} with $L = 2$, $\phi$ as the identity map, and $M_0 = \begin{bmatrix} u & \epsilon \\ \epsilon & v\end{bmatrix}$ satisfying $u\epsilon + v\epsilon \neq 0$.  Then, $\lim\limits_{t \to \infty} \left|\hat{Y}(t)_{21}\right| = \infty$.
\end{lemma}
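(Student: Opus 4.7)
My plan is to specialize the inductive calculation underlying Proposition~\ref{prop: convergence} to the corner case $a=1$, $b=0$, $c=1$, which is outside the scope of that proposition because of its assumption $b\neq 0$, and then analyze the resulting scalar recursion. Because only the entry $Y_{22}=1$ is observed in row~2 while the full row $(Y_{11},Y_{12})=(1,0)$ is observed in row~1, the minimum-norm problem in Step~1 decouples across the two rows of $W_t$. For row~1 the two constraints determine it uniquely as $e_1^{\top}M_t^{-1}$, and the minimum-norm row~2 subject to the single constraint $w_2^{\top}M_te_2=1$ equals $e_2^{\top}M_t/\|M_te_2\|^2$. Substituting into Step~2 and using that $M_t$ is symmetric, a direct computation gives
$$
M_{t+1}=e_1e_1^{\top}+\frac{M_t^{2}\,e_2e_2^{\top}\,M_t^{2}}{(e_2^{\top}M_t^{2}e_2)^{2}}.
$$
Writing $k_t:=\hat Y(t)_{21}=[W_tM_t]_{21}=(e_2^{\top}M_t^{2}e_1)/(e_2^{\top}M_t^{2}e_2)$, the right-hand side collapses to
$$
M_{t+1}=\begin{bmatrix}1+k_t^{2}&k_t\\ k_t&1\end{bmatrix},
$$
and squaring this matrix and taking the ratio of its $(2,1)$ and $(2,2)$ entries yields the one-dimensional recursion
$$
k_{t+1}=f(k_t),\qquad f(k):=\frac{k(2+k^{2})}{1+k^{2}},
$$
valid for all $t\ge 1$. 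From $M_0^{2}$ I read off the initial value $k_0=(u+v)\epsilon/(\epsilon^{2}+v^{2})$; the hypothesis $u\epsilon+v\epsilon\neq 0$ forces $\epsilon\neq 0$, so the denominator is strictly positive and $k_0\neq 0$.

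The rest of the argument is elementary calculus on the map $f$. The identity $f(k)-k=k/(1+k^{2})$ shows that $f$ preserves the sign of its argument and strictly increases $|k|$ whenever $k\neq 0$, so the sequence $(k_t)_{t\ge 0}$ is strictly monotone and stays on one side of $0$. The only real fixed point of $f$ is $0$, which is excluded as a limit by the sign invariance and strict monotonicity; hence the monotone sequence is unbounded and $|k_t|\to\infty$. Since $\hat Y(t)_{21}=k_t$, this proves $\lim_{t\to\infty}|\hat Y(t)_{21}|=\infty$.

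The only place that needs care is checking that the minimum-norm formula for row~2 of $W_t$ is well posed at every iterate, i.e., that $M_te_2\neq 0$ for all $t$; from the explicit form of $M_{t+1}$ above, the second column of $M_t$ equals $(k_{t-1},1)^{\top}$ for $t\ge 1$, so this is automatic. Aside from this bookkeeping, there is no real obstacle: the key observation is that the degeneracy $b=0$ collapses the two competing fixed points of Proposition~\ref{prop: convergence} (namely $a/b$ and $-b(a^{2}+b^{2}+c^{2})/(ac^{2})$) into the single repelling fixed point at $0$, making divergence to infinity inevitable.
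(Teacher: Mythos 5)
Your proof is correct and follows essentially the same route as the paper: both reduce the iteration to the closed form $M_t=\bigl[\begin{smallmatrix}1+k^2 & k\\ k & 1\end{smallmatrix}\bigr]$ and the scalar recursion $k\mapsto k(2+k^2)/(1+k^2)$ with initial value $(u+v)\epsilon/(\epsilon^2+v^2)$, then argue that $|k_t|$ increases without bound. Your divergence step (monotone sequence whose only possible finite limit is the fixed point $0$, which is excluded) is in fact slightly more airtight than the paper's observation that $|a_{t+1}|=|a_t||c_t|$ with $|c_t|>1$, but the substance is identical.
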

\begin{proof}
    As in the proof of Lemma~\ref{lemma: RFM alpha 1}, we utilize the following form of the updates for $M_t$: 
    \begin{align*}
        M_{t+1} = Y_{1:}^T Y_{1:} + M_{t}^T {M_{t, 2}^{\dagger}}^T Y_{22}^2 M_{t, 2}^{\dagger} M_t~;
    \end{align*}
    where $Y_{1:}$ denotes the first row of $Y$ and $M_{t, 2}$ denotes the second column of $M_t$.  We will show by induction that $M_{t}$ for $t \geq 1$ has the form: 
    \begin{align*}
        M_t = \begin{bmatrix} 1 + a_t^2  & a_t  \\ a_t & 1\end{bmatrix} ; 
    \end{align*}
    where $\{a_t\}_{t \geq 1}$ is an increasing sequence with $a_1 \neq 0$ and $\lim\limits_{t \to \infty} |a_t| \to \infty$.  We start with the base case for $t = 1$:
    \begin{align*}
        M_1 &= \begin{bmatrix} 1   & 0 \\ 0 & 0\end{bmatrix} +  \frac{1}{(\epsilon^2 + v^2)^2} \begin{bmatrix} u & \epsilon \\ \epsilon & v\end{bmatrix}  \begin{bmatrix}  \epsilon \\ v\end{bmatrix}  \begin{bmatrix}  \epsilon & v\end{bmatrix}  \begin{bmatrix} u & \epsilon \\ \epsilon & v\end{bmatrix} \\
        &= \begin{bmatrix} 1  + \frac{(u\epsilon + v\epsilon)^2}{(\epsilon^2 + v^2)^2} & \frac{u\epsilon + v\epsilon}{\epsilon^2 + v^2} \\ \frac{u\epsilon + v\epsilon}{\epsilon^2 + v^2}  & 1 \end{bmatrix}~.
    \end{align*}
    Hence, the statement holds for $a_1 = \frac{u\epsilon + v\epsilon}{\epsilon^2 + v^2}$ provided $u\epsilon + v\epsilon \neq 0$. Now we assume that the statement holds for $t$ and prove it holds for iteration $t+1$.  Namely, we have: 
    \begin{align*}
         M_{t+1} &= \begin{bmatrix} 1   & 0 \\ 0 & 0\end{bmatrix} +  \frac{1}{(a_t^2 + 1)^2} \begin{bmatrix} 1 + a_t^2 & a_t \\ a_t & 1\end{bmatrix}  \begin{bmatrix}  a_t \\ 1\end{bmatrix}  \begin{bmatrix}  a_t & 1\end{bmatrix} \begin{bmatrix} 1 + a_t^2 & a_t \\ a_t & 1\end{bmatrix} \\
         &= \begin{bmatrix} 1  + \frac{(a_t^3 + 2a_t)^2}{(a_t^2 + 1)^2} &  \frac{(a_t^3 + 2a_t)}{(a_t^2 + 1)} \\ \frac{(a_t^3 + 2a_t)}{(a_t^2 + 1)}  & 1 \end{bmatrix}~.
    \end{align*}
    Hence, the statement holds for $a_{t+1} = \frac{(a_t^3 + 2a_t)}{(a_t^2 + 1)}$.  Thus, induction is complete.  Our proof additionally shows that $\{|a_t|\}$ is a strictly monotonically increasing, diverging sequence since: 
    \begin{align*}
       |a_{t+1}| = \left|\frac{(a_t^3 + 2a_t)}{(a_t^2 + 1)} \right|= |a_t| \left| \frac{(a_t^2 + 2)}{(a_t^2 + 1)}\right| = |a_t| |c_t| ~; 
    \end{align*}
    where $|c_t| > 1$.  Next, for any finite $t$, we compute the entry $Y_{21}$ as a function of $a_t$.  Recalling that our solution is of the form $\hat{Y}(t) = W_t M_t$, we have: 
    \begin{align*}
       \begin{bmatrix}  W_{t, 21} & W_{t, 22} \end{bmatrix} =  \begin{bmatrix} \frac{a_t}{a_t^2 + 1} & \frac{1}{a_t^2 +1} \end{bmatrix} \implies \hat{Y}(t)_{21} = a_t + \frac{a_t}{a_t^2 +1}~.
    \end{align*}
    Hence, we have that $\lim\limits_{t \to \infty} \left|\hat{Y}(t)_{21}\right| = \infty$, which concludes the proof. 
\end{proof}

The work~\cite{razin2020implicit} showed that increasing depth in linear networks initialized near zero and trained using gradient descent to reconstruct $\tilde{Y}_{11}, \tilde{Y}_{12}, \tilde{Y}_{22}$ resulted in reconstructions driving $\tilde{Y}_{21}$ to infinity.  Upon rescaling entries in the second row of $\tilde{Y}$ by the maximum value in the row, such reconstructions would indeed correspond to a rank 1 reconstruction of the above matrix.  Lemma~\ref{lemma: all norms to infty} proves that lin-RFM exhibits this same behavior for this example.

\clearpage

\begin{figure}[!t]
    \centering
    \includegraphics[width=\textwidth]{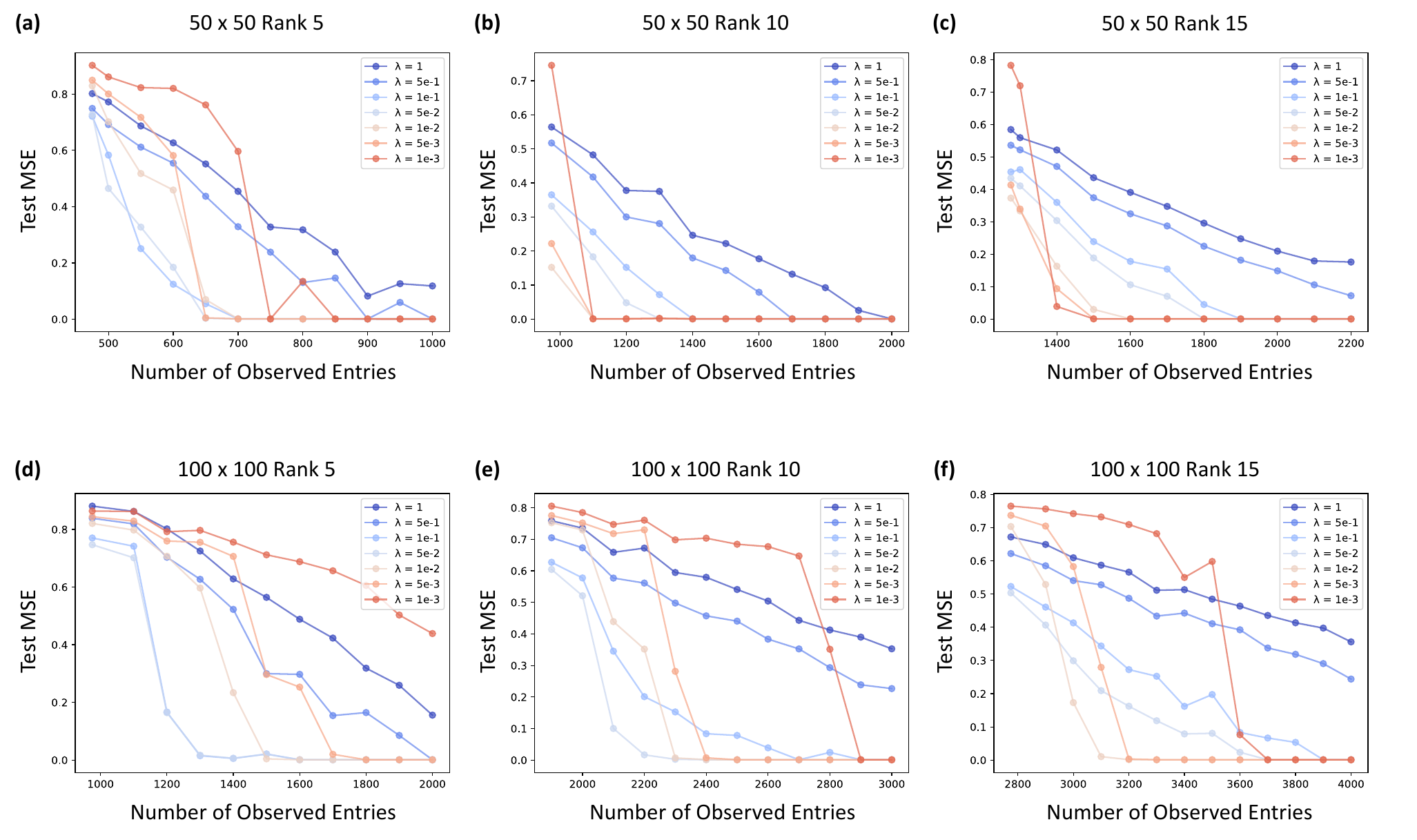}
    \caption{Performance of lin-RFM with $\alpha = 1$ across values of ridge-regularization parameter $\lambda \in \{1, .5, .1, .05, .01, .005, .001\}$.  We generally observe that decreasing ridge-regularization parameter leads to lower test error as rank of the ground truth matrix increases.}
    \label{fig: Regularization}
\end{figure}

\clearpage 

\begin{figure}[!t]
    \centering
    \includegraphics[width=\textwidth]{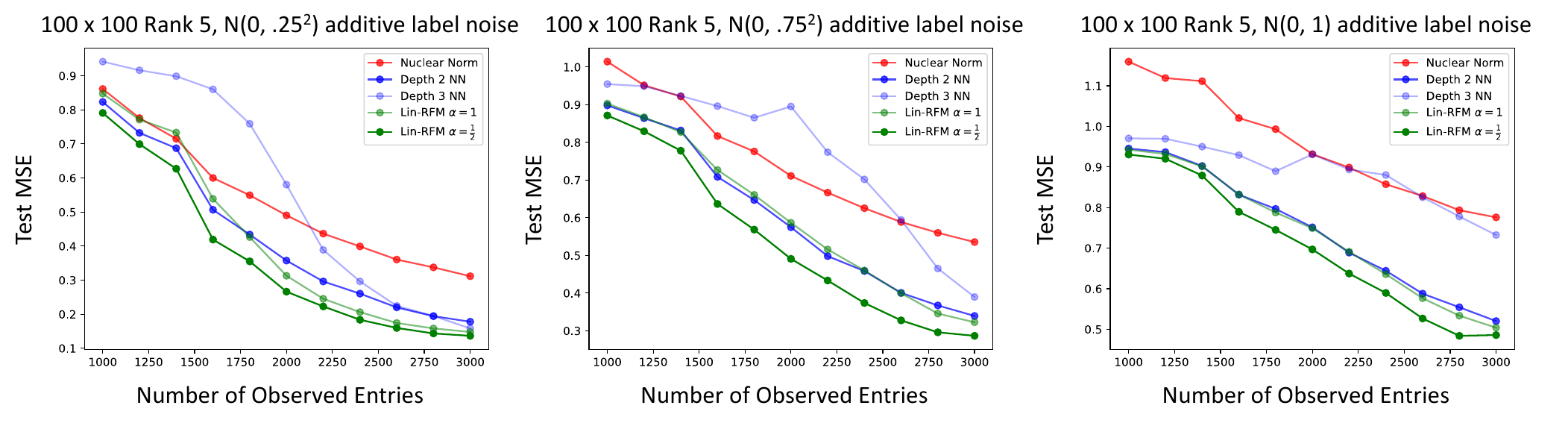}
    \caption{Performance of lin-RFM with $\alpha = 1, \alpha = \frac{1}{2}$, nuclear norm minimization, and depth $2, 3$ linear networks for recovering entries in $100 \times 100$ rank $5$ matrices with i.i.d. label noise $\epsilon \sim \mathcal{N}(0, \sigma^2)$ added to training observations.  Test MSE is calculated using non-noisy data. We consider ridge-regularization values of $\lambda \in \{10, 5, 1, .5, .1, .05, .01, .005, .001\}$ for lin-RFM.  All results are averaged over $5$ draws of data and noise.}
    \label{fig: Label Noise}
\end{figure}

\end{document}